\documentclass[letterpaper]{article}
\usepackage[preprint]{aaai2027}
\usepackage[hyphens]{url}
\usepackage{graphicx}
\usepackage{natbib}
\usepackage{caption}
\usepackage{amsmath}
\usepackage{amssymb}
\usepackage{booktabs}
\usepackage{algorithm}
\usepackage{algorithmic}
\usepackage{amsthm}
\newtheorem{assumption}{Assumption}[section]
\newtheorem{proposition}{Proposition}[section]
\newcommand{\afcrps}{\mathrm{afCRPS}}
\newcommand{\fcrps}{\mathrm{fCRPS}}
\newcommand{\crps}{\mathrm{CRPS}}

\newcommand{\jsecond}{J_{\mathrm{2nd}}}
\newcommand{\E}{\mathbb{E}}
\newcommand{\R}{\mathbb{R}}
\newcommand{\KL}{D_{\mathrm{KL}}}

\title{DIAL: Dynamic and Informed Adjustment of the Native Stochastic Law for Reliable Weather Ensembles}
\author{
Yucheng Wang\textsuperscript{\rm 1,2},
Haonan Guo\textsuperscript{\rm 1,2},
Tianhao Zhao\textsuperscript{\rm 1,2},
Di Wang\textsuperscript{\rm 1,2},\\
Yu Wu\textsuperscript{\rm 1,2},
Zengmao Wang\textsuperscript{\rm 1},
Bo Du\textsuperscript{\rm 1}
}
\affiliations{
\textsuperscript{\rm 1}Wuhan University\\
\textsuperscript{\rm 2}Zhongguancun Academy
}

\begin{document}

\maketitle

\begin{abstract}

Ensemble weather forecasts can look reliable in global averages while hiding too little spread in some cases and too much in others.
These errors vary across variables, spatial scales, lead times, and weather states.
Adapting randomness during forecasting can correct these errors, but training fixes stochastic behavior beforehand and output calibration acts afterward.
Inference guidance also acts during forecasting, but typically scores ensemble members separately rather than the ensemble as a whole.
Reliability feedback arrives at the end, and tracing it backward through many chaotic forecast steps is expensive and unstable.
We therefore introduce \emph{Dynamic and Informed Adjustment of the Native Stochastic Law} (DIAL), which controls a frozen model's native stochasticity.
DIAL learns from past forecasts to predict whether each variable and scale needs more or less spread.
As a forecast develops, it estimates how possible noise changes will affect later forecast accuracy and reliability.
It combines these predictions to make small noise updates at each step.
This loop avoids retraining the weather model and full backpropagation through the forecast.

We evaluate DIAL on FourCastNet 3 (FCN3) using later ERA5 years unseen during training.
DIAL provides better scale-resolved reliability than learned postprocessing and reward-based control baselines.
Relative to the unmodified Native FCN3 reference, DIAL achieves an 8.27\% improvement in the primary multiscale continuous ranked probability score (CRPS).

\end{abstract}

\section{Introduction}

Ensemble weather forecasts represent atmospheric uncertainty by generating multiple plausible futures rather than a single deterministic forecast \cite{price2025gencast,lang2026aifscrps}. An ensemble is reliable when its spread matches the error of its ensemble mean forecast across many cases \cite{gneiting2007calibration,dirkson2026misdiagnosing}. The spread-to-skill ratio (SSR) quantifies this relationship: low values indicate an overconfident ensemble, whereas high values indicate an underconfident ensemble \cite{dirkson2026misdiagnosing}. Yet Figure~\ref{fig:motivation} shows that global averages can hide opposing reliability gaps across variables, spatial scales, forecast lead times, and weather states \cite{rodwell2018flow,jung2008scale}.

\begin{figure}[!t]
    \centering
    \captionsetup{belowskip=-4pt}
    \includegraphics[width=0.84\linewidth]{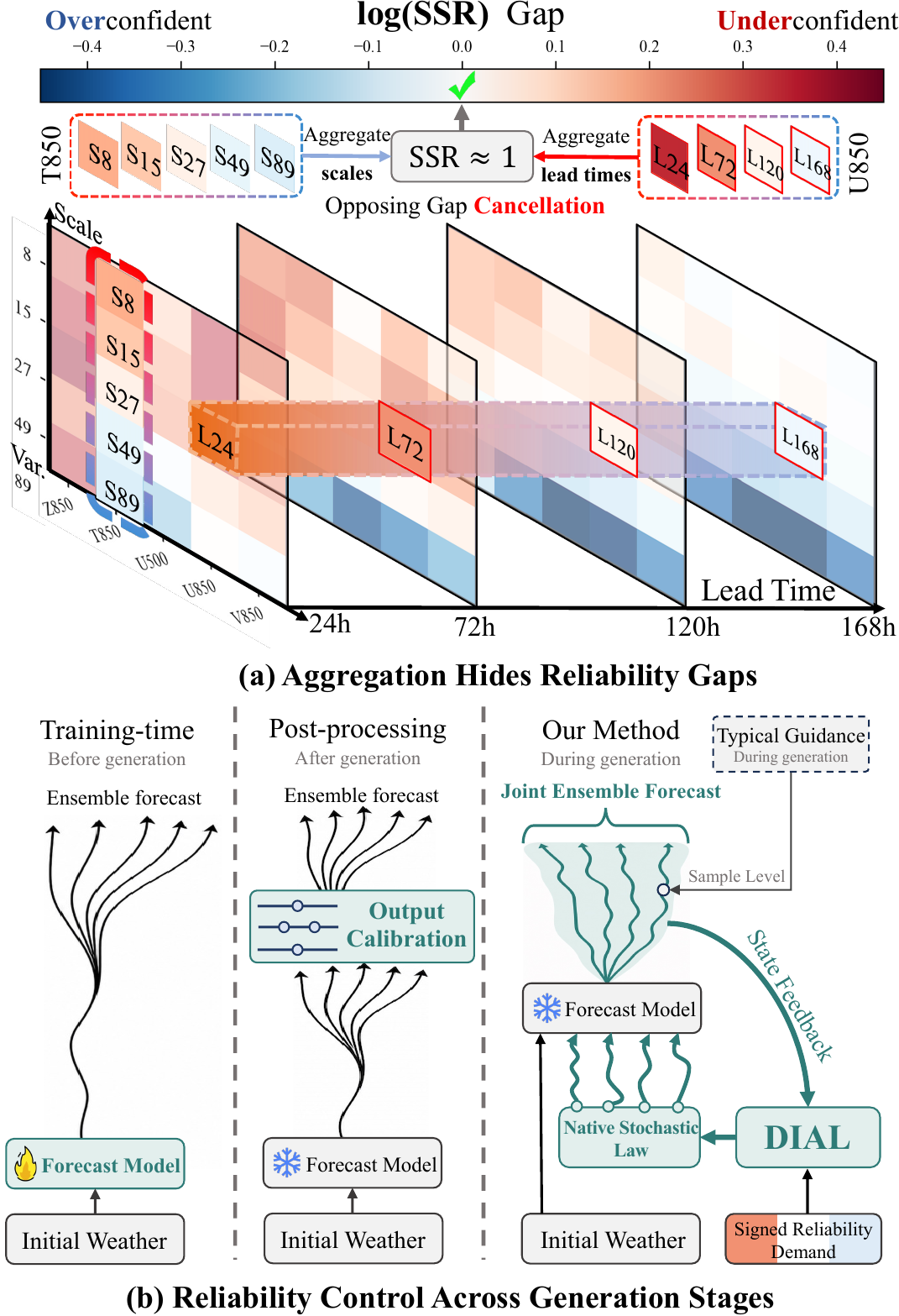}
    \caption{Why pooled reliability is insufficient.
    (a) Aggregation can cancel signed SSR gaps across variables, scales, and
    leads. (b) Unlike training-time stochasticity or output post-processing,
    DIAL uses signed reliability demand and forecast-state feedback to adjust
    the native noise during ensemble generation.}
    \label{fig:motivation}
\end{figure}

Reliability diagnostics reveal gaps in FourCastNet 3 (FCN3) \cite{bonev2025fourcastnet3}, NeuralGCM \cite{kochkov2024neuralgcm}, and ArchesWeatherGen (AWG) \cite{couairon2024archesweathergen}. The patterns differ across systems, and FCN3 exhibits reliability gaps in both directions within a single model. Figure~\ref{fig:motivation}(a) shows that FCN3 is overconfident at large scales but underconfident at the smallest scales. For several wind variables, the gap also changes direction with lead time. Together, these patterns require reliability control to work in both directions, making FCN3 a demanding test case.

Achieving this control requires deciding when and where to intervene in the forecasting process.
Existing methods act at three stages, and each stage has a different limitation.
Training methods learn stochastic behavior together with the forecast model, but this behavior is usually fixed after deployment and cannot adapt to the current forecast state \cite{price2025gencast,lang2026aifscrps,bonev2025fourcastnet3}.
Output calibration corrects completed forecasts, but cannot change how randomness enters the model or develops through weather dynamics \cite{chen2024generativepostprocessing,benbouallegue2024poet}.
Inference guidance steers a pretrained generator during sampling without retraining it \cite{dhariwal2021diffusion,ren2026driftlite}.
Such guidance usually scores samples separately rather than learning how possible updates affect joint ensemble reliability.
These limitations motivate closed-loop control that adjusts the native noise of a frozen model as the forecast evolves and corrects signed reliability gaps.

Implementing this control in FCN3 requires estimating how each possible change to its native noise affects future skill and reliability.
FCN3 injects this noise at every forecast step, so control decisions occur every six hours \cite{bonev2025fourcastnet3}.
Reliability feedback arrives much later.
It can be computed only after the ensemble forecast is complete and the verifying weather data are available.
Full backpropagation through every forecast step and ensemble member would be too expensive.
Chaotic atmospheric dynamics can also make long-range gradients explode or vanish before they reach early control decisions \cite{mikhaeil2022chaotic}.
Our 28-step reliability objective is much longer than FCN3's final 8-step training stage, worsening both problems \cite{bonev2025fourcastnet3}.
Reliable long-horizon control therefore needs a way to assign delayed feedback without backpropagating through the complete forecast.

Recent image generation methods tune sampling with reinforcement learning or stochastic optimal control without backpropagating the final reward through the full process \cite{black2024ddpo,fan2023dpok,domingoenrich2025adjoint}.
Yet their objectives do not transfer directly to ensemble weather reliability.
First, image methods score outputs separately, but reliability is measured over the ensemble.
Second, one reward summarizes image quality, but reliability gaps must remain signed and separate by variable, spatial scale, and lead time.
Combining these gaps can let too little spread cancel too much spread elsewhere.
Third, image objectives usually favor one direction, but forecast spread can be above or below its target.
These differences require an ensemble reliability constraint.
A weather controller must therefore handle delayed feedback and increase or decrease spread separately for each variable and spatial scale.

Figure~\ref{fig:method-overview} summarizes DIAL's learning and control loop.
DIAL treats reliability repair as control of the native noise in a frozen weather model.
It uses signed ensemble constraints to state where spread should increase or decrease without assuming that the future weather distribution is known.
A KL budget limits how far the controlled distribution can move from the model's native distribution.
Together with forecast loss, these constraints define an ideal Gibbs tilt that gives more weight to skillful and reliable forecast paths \cite{donsker1975variational}.
To handle delayed feedback without complete backpropagation, DIAL approximates this tilt through a closed loop.
The Estimator predicts reliability needs, while the Pilot tracks the evolving forecast.
The Q-Critic estimates how possible noise changes affect future skill and reliability.
The Controller then uses the Actuator's variable- and scale-matched noise bases to make allowed updates within the remaining KL budget.
Our contributions are as follows.

\begin{itemize}
    \item We formulate ensemble reliability repair as stochastic optimal control of a frozen model's native noise under signed ensemble and KL constraints.
    \item We propose DIAL, a closed loop that approximates the ideal path law without backpropagation through the full forecast.
    The Estimator and Pilot track reliability needs and forecast evolution, while the Q-Critic and Controller guide variable- and scale-matched Actuator updates.
    
    \item We evaluate DIAL on later ERA5 years against learned postprocessing and reward-based control.
    Relative to the unmodified Native FCN3 reference, DIAL achieves an 8.27\% improvement in the primary multiscale CRPS.
    It also reduces reliability errors in both directions.
\end{itemize}

\section{Related Work}

Probabilistic weather models represent uncertainty with learned ensembles or
stochastic perturbations
\cite{lang2026aifscrps,bonev2025fourcastnet3,berner2017stochastic}.
Their stochastic behavior is usually fixed before deployment, even though
reliability varies with regime, scale, and lead time
\cite{jung2008scale,rodwell2018flow}.
DIAL instead adjusts native noise during forecasting while keeping the model
frozen.

Post-processing calibrates completed ensembles, including their cross-variable
and spatial dependence
\cite{gneiting2005emos,schefzik2013ecc,vannitsem2021postprocessing,
chen2024generativepostprocessing,benbouallegue2024poet}.
These methods cannot change where randomness enters a frozen model or how it
develops through weather dynamics.
DIAL acts inside the forecast.

Generative guidance and reward fine-tuning steer sampling with likelihoods,
rewards, gradients, or policies
\cite{uehara2025inferencealignment,ren2026driftlite,clark2024draft,
black2024ddpo}.
These methods usually favor individual samples in one direction, whereas
ensemble reliability is joint, scale-resolved, and bidirectional.
DIAL therefore controls an ensemble law rather than a preferred sample.

Reliability feedback is delayed, and long-horizon gradients are costly and
unstable in chaotic dynamics \cite{mikhaeil2022chaotic}.
Stochastic-control methods handle delayed effects with value, policy, or
adjoint quantities, while KL regularization limits deviation from a reference
process
\cite{domingoenrich2025adjoint,han2025stochasticcontrol,
blessing2025trustregion}.
Unlike methods that target a supplied reward, DIAL derives signed ensemble
demand and applies budgeted updates in the model's native KL geometry without
full-forecast backpropagation.

\section{Method}

\begin{figure*}[!t]
    \centering
    \captionsetup{belowskip=-4pt}
    \includegraphics[width=0.98\textwidth]{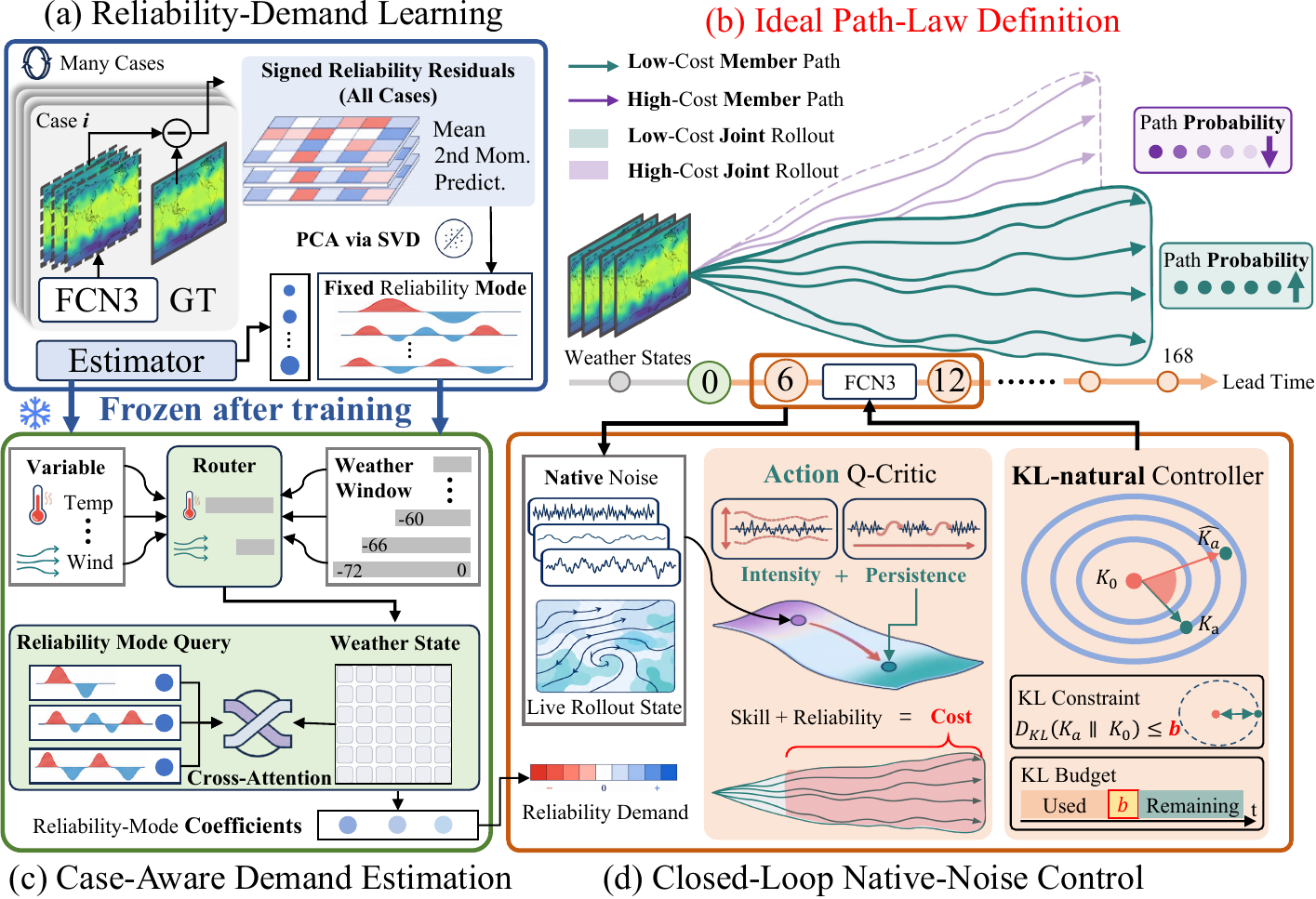}
    \caption{DIAL learns where reliability needs correction and controls native noise as the forecast evolves.
    (a) Training forecasts and verifying weather reveal signed reliability
    errors, which define a fixed set of reliability modes.
    (b) The ideal path law gives more weight to ensemble paths with lower
    combined forecast and reliability cost.
    (c) The frozen Estimator uses recent weather history for each variable and
    scale to predict signed reliability demand for the current case.
    (d) During forecasting, the Pilot provides the live rollout state and the
    Q-Critic scores possible native-noise changes.
    The Controller sends an allowed action to the native-noise Actuator within
    the remaining KL budget.}
    \label{fig:method-overview}
\end{figure*}

\subsection{Problem Formulation and DIAL Overview}

\paragraph{Problem formulation.}
Let $x_0$ be an initial weather state, $Y_{1:T}$ the verifying weather sequence, and $M$ the number of scored ensemble members.  A frozen weather model advances each member $m$ as $X_{t+1}^{(m)}=F_{\rm wx}(X_t^{(m)},Z_{t+1}^{(m)})$, where the stochastic forcing $Z_{t+1}^{(m)}$ is drawn from the model's native noise kernel $K_0$.  We collect the weather states and noise draws of all scored members in $\Omega^{\rm eval}=\{X_{0:T}^{(1:M)},Z_{1:T}^{(1:M)}\}$.  DIAL also runs a Pilot forecast using an independent random stream, denoted by $\Omega^{\rm pilot}$.  The Pilot provides live information about how the controlled forecast is evolving but is excluded from ensemble scoring.

We combine the scored and Pilot paths into the augmented path $\widetilde\Omega=(\Omega^{\rm eval},\Omega^{\rm pilot})$, whose native law is $P_0$.  Feasible adjustments through the Actuator induce controlled laws $P$; we denote their reachable family by $\mathcal P_{\mathcal A}$.  We measure native-law fidelity by the normalized Path-KL $\bar D_{\rm path}(P\Vert P_0):=\nu_{\rm KL}^{-1}\KL(P\Vert P_0)$.  Among these reachable laws, DIAL seeks to improve multiscale probabilistic skill while satisfying ensemble-reliability and native-law fidelity constraints:
\begin{equation}
\begin{aligned}
\min_{P\in\mathcal P_{\mathcal A}}\quad
&\E_P\!\left[\mathcal S_{\rm MS}(\Omega^{\rm eval},Y)\right] \\
\text{s.t.}\quad
&\jsecond(P)\leq\epsilon_{\rm rel},\quad
\bar D_{\rm path}(P\Vert P_0)\leq\beta_{\rm path}.
\end{aligned}
\label{eq:control_problem}
\end{equation}
Here $\mathcal S_{\rm MS}$ is a multiscale forecast score that evaluates ensemble distributions over both full weather fields and fixed spatial-scale projections; its weights are frozen after training.  The normalization $\nu_{\rm KL}$ is also frozen and shared across methods.  The reliability constraint $\jsecond$ compares ensemble moments with verifying weather across a population of forecast cases.  The Path-KL constraint uses no truth and limits how far the controlled stochastic law can depart from the frozen model's native law.  Under absolute continuity, Path-KL decomposes into expected one-step kernel KL terms, allowing DIAL to track and allocate the remaining budget as the forecast evolves.

\paragraph{DIAL overview.}
DIAL tackles this constrained problem with the closed loop in Figure~\ref{fig:method-overview}.
The Estimator predicts signed reliability demand, while the Pilot and recurrent memory track forecast evolution.
The Q-Critic evaluates candidate noise updates, and the Controller uses its gradient and the native KL geometry to obtain a legal Actuator action within the remaining budget.
The following subsections derive the pooled reliability cost and its future estimate, then describe the local update, learning, and inference.

\subsection{From Pooled Reliability to an Ideal Stochastic Law}

Reliability is a property of the jointly generated weather ensemble, not a reward assigned independently to each member.  For a base coordinate $j$ indexing variable, spatial scale, and lead, let $\widetilde X_{imj}$ and $\widetilde Y_{ij}$ be forecast-member and verifying-weather values normalized with training statistics for initialization $i$, and let $\overline X_{ij}=M^{-1}\sum_m\widetilde X_{imj}$.  We measure three complementary signed residuals.
\begin{subequations}
\label{eq:moment_residuals}
The mean row matches the climatological first moment:
\begin{equation}
h^{\rm mean}_{ij}=\overline X_{ij}-\widetilde Y_{ij}.
\label{eq:mean_residual}
\end{equation}
The second-moment row analogously matches the climatological second moment:
\begin{equation}
h^{\rm second}_{ij}
=\frac1M\sum_m\widetilde X_{imj}^{2}-\widetilde Y_{ij}^{2}.
\label{eq:second_residual}
\end{equation}
The predictability row compares member--member and member--truth dependence through ordered pairs:
\begin{equation}
h^{\rm pred}_{ij}
=\frac{\sum_{m\ne m'}\widetilde X_{imj}\widetilde X_{im'j}}
{M(M-1)}-\overline X_{ij}\widetilde Y_{ij}.
\label{eq:predictability_residual}
\end{equation}
\end{subequations}
Stacking the three row types over all base coordinates gives $h_i\in\R^D$, and $W\succ0$ is a training-frozen cell-weight matrix.
\begin{subequations}
\label{eq:pooled_reliability}
DIAL pools the signed residuals across independent forecast initializations as
\begin{equation}
g(P)=\E_i\!\left[
\E_{\Omega\sim P(\cdot\mid x_{0,i})}
\left[h_i(\Omega,Y_i)\right]
\right].
\label{eq:pooled_residual}
\end{equation}
Only after this pooling does DIAL apply the quadratic penalty:
\begin{equation}
\jsecond(P)=g(P)^\top Wg(P).
\label{eq:pooled_functional}
\end{equation}
\end{subequations}
Pooling must precede the penalty because $\jsecond\neq\E_i[h_i^\top Wh_i]$, which would be separable by case rather than a population reliability constraint.

To make this pooled reliability constraint usable by a sequential controller, we dualize its quadratic penalty into a signed cost on each joint ensemble path.  For reliability multiplier $\eta>0$, the quadratic penalty admits the variational form
\begin{equation}
\eta g^\top Wg
=\sup_{\xi\in\R^D}
\left\{\xi^\top g-(4\eta)^{-1}\xi^\top W^{-1}\xi\right\}.
\label{eq:dualized_cost}
\end{equation}
For fixed $\xi$, the dual vector defines the joint ensemble cost $\mathcal C_\xi=\mathcal S_{\rm MS}+\xi^\top h$ and assigns signed weights that favor movement toward pooled reliability and penalize the reverse.  Let $H$ denote the deployment-available history and initial state.
\begin{subequations}
\label{eq:gibbs_components}
Marginalizing the unavailable verifying weather gives
\begin{equation}
\overline{\mathcal C}_\xi(H,\Omega)
=\E_{Y\mid H}\!\left[\mathcal C_\xi(\Omega,Y)\right].
\label{eq:conditional_cost}
\end{equation}
Under the native law, the corresponding normalizer is
\begin{equation}
\mathcal Z_\xi(H)
=\E_{P_0(\cdot\mid H)}\!\left[
\exp\!\left(-\overline{\mathcal C}_\xi/\bar\lambda_{\rm KL}\right)
\right].
\label{eq:native_normalizer}
\end{equation}
\end{subequations}
Eq.~\eqref{eq:control_problem} normalizes Path-KL by $\nu_{\rm KL}$, so the temperature for raw Path-KL is $\bar\lambda_{\rm KL}:=\lambda_{\rm KL}/\nu_{\rm KL}>0$.  For fixed $\xi$, absolute continuity and finite exponential moments give the ideal Gibbs law relative to $P_0$ \cite{donsker1975variational}:
\begin{equation}
\frac{dP_\xi^*(\cdot\mid H)}{dP_0(\cdot\mid H)}(\Omega)
=\frac{\exp[-\overline{\mathcal C}_\xi(H,\Omega)/
\bar\lambda_{\rm KL}]}{\mathcal Z_\xi(H)}.
\label{eq:dual_tilt}
\end{equation}
This law tilts the native stochastic law toward joint ensemble forecasts with lower combined forecast and reliability cost. It is a path-space I-projection toward a reliability set defined by pooled moments \cite{csiszar1975idivergence}, rather than a classical bridge with prescribed endpoint marginals \cite{leonard2014schrodinger}.  The variational identity is standard; DIAL contributes the pooled ensemble reliability constraint and its execution through native weather-noise coordinates.

This ideal law is not directly executable.
Its normalizer integrates over all future joint ensemble forecasts, and its future cost is unknown at decision time.
It may also require noise changes that the Actuator cannot realize.
DIAL therefore avoids the global normalizer and estimates the future cost from forecast-visible state.
At inference, it converts this estimate into a legal local update of the native noise without online truth.

\subsection{Cost-to-Go from Forecast-Visible State}

The ideal law depends on a future cost that is unknown when a control action is chosen.  DIAL constructs a forecast-visible estimate in two stages: the Estimator predicts case-specific reliability demand before forecasting, and the Q-Critic represents the action-conditioned cost-to-go as live Pilot information becomes available.

The Estimator supplies demand rather than an action. From the available pre-forecast weather history, it dynamically selects a context of 3 to 12 frames for each variable and spatial scale, then predicts low-rank coefficients $\widehat\alpha=E_\psi(H^{\rm pre})$.  A training-frozen decoder $B_r$ reconstructs the signed demand field $\widehat r=\bar R+B_r\widehat\alpha$ over moment types, variables, scales, and lead times.  The sign and magnitude of $\widehat r$ describe the direction and size of the predicted reliability residual.  The Estimator runs once at $t_0$.  At forecast step $t$, a deterministic mask $M_t$ zeros out lead times that have already passed, giving the \emph{lead-masked demand} $\widehat r_{t:}=M_t\widehat r$.  This masking does not revise the demand forecast: memory and past actions affect the Q-Critic state, not $\widehat r$.

The online state augments this fixed demand forecast with live information.  The Pilot token $W_t$ summarizes the current controlled weather state without access to verifying weather, while recurrent memory $m_{t-1}$ summarizes earlier Pilot changes and executed noise adjustments.  The state also contains the previous executed action and the exact remaining KL ledger:
\begin{equation}
\begin{aligned}
s_t=(\widehat r_{t:},W_t,t,a_{t-1}^{\rm exec},
m_{t-1},b_t^{\rm rem}).
\end{aligned}
\label{eq:forecast_visible_state}
\end{equation}
\begin{subequations}
\label{eq:q_critic}
A dueling Q-Critic uses separate heads for the remaining multiscale score and low-rank moment contributions.  For each head $\ell\in\{\mathrm{skill},h\}$, subtracting the advantage at the native action $a=0$ gives
\begin{equation}
Q_\phi^\ell(s,a)
=V_\phi^\ell(s)+\widetilde A_\phi^\ell(s,a)
-\widetilde A_\phi^\ell(s,0).
\label{eq:dueling_q}
\end{equation}
The pooled dual combines the two heads into the action-conditioned cost-to-go
\begin{equation}
Q_{\phi,\xi}(s,a)
=Q_\phi^{\rm skill}(s,a)
+(B_r^\top\xi)^\top Q_\phi^h(s,a).
\label{eq:forecast_visible_q}
\end{equation}
\end{subequations}
This construction gives $Q_\phi^\ell(s,0)=V_\phi^\ell(s)$, so the native action has zero advantage.  The moment head $Q_\phi^h\in\R^K$ predicts remaining moment coefficients, while $Q_\phi^{\rm skill}$ predicts the remaining multiscale score.  The pooled dual $\xi$ weights the moment head and preserves the population objective.  Action conditioning lets the Controller use $\nabla_a Q_{\phi,\xi}(s_t,0)$ to choose a local noise update.

\subsection{Projected KL-Natural Control of the Native Kernel}

\paragraph{KL-natural proposal.}
The Q-Critic provides the cost gradient at the native action $a=0$.  The Controller has no learned policy parameters.  Near the native action, the Hessian of the normalized one-step kernel KL in Actuator coordinates defines the native Fisher geometry.  This geometry is computed from the kernel parameterization rather than learned.  A second-order expansion gives
\begin{equation}
\begin{aligned}
\bar D_{\mathrm{KL},t}(a\mid s)
&=\tfrac12 a^\top F_{\rm KL}(s)a
+o(\lVert a\rVert^2),\\
F_{\rm KL}(s)
&=\left.\nabla_a^2\bar D_{\mathrm{KL},t}(a\mid s)
\right|_{a=0}.
\end{aligned}
\label{eq:kl_fisher_geometry}
\end{equation}
The native Fisher penalizes directions that would change the stochastic law most strongly.  Adding a damping term $\gamma I$ gives the damped metric $G_t=\lambda_{\rm KL}F_{\rm KL}(s_t)+\gamma I$ and stabilizes the local solve.  Combining this damped metric with the learned Q-gradient then yields the unconstrained natural proposal
\begin{equation}
a_t^{\rm nat}
=-G_t^{-1}\nabla_aQ_{\phi,\xi}(s_t,0).
\label{eq:natural_proposal}
\end{equation}
Only this unconstrained proposal is closed form; the complete Controller also enforces feasibility explicitly.

\paragraph{Feasible Actuator execution.}
A validation-frozen rule first allocates the one-step allowance $b_t^{\rm step}$ from the remaining ledger $b_t^{\rm rem}$.
\begin{subequations}
\label{eq:feasible_control}
The intersection of the legal action set and this KL allowance defines
\begin{equation}
\mathcal F_t
=\mathcal A_{\rm legal}\cap
\mathcal B_{\rm KL}(b_t^{\rm step}).
\label{eq:feasible_set}
\end{equation}
DIAL projects the natural proposal onto this set in the damped Fisher metric:
\begin{equation}
a_t^{\rm exec}
 =\Pi^{G_t}_{\mathcal F_t}(a_t^{\rm nat}).
\label{eq:feasible_projection}
\end{equation}
\end{subequations}
\begin{subequations}
\label{eq:actuator_dynamics}
The evaluated Actuator controls the innovation scale $\sigma_{k,t}$ and persistence $\rho_{k,t}$ of each native source $k$.  Each source evolves according to
\begin{equation}
Z'_{k,t+1}
 =\rho_{k,t}Z'_{k,t}
+\sigma_{k,t}B_k^0\varepsilon_{k,t+1}.
\label{eq:source_evolution}
\end{equation}
The corresponding Actuator coordinates are
\begin{equation}
a_{k,t}
 =\bigl[\Delta\log\sigma_{k,t},\;
\Delta\operatorname{atanh}\rho_{k,t}\bigr].
\label{eq:actuator}
\end{equation}
\end{subequations}
Here $B_k^0$ maps a standardized innovation into the fixed native spatial structure of source $k$.  The two coordinates change innovation scale and persistence.  Eight sources yield 16 control coordinates, while richer matched bases are outside the evaluated instance.  In this instance, $\mathcal A_{\rm legal}$ bounds $\sigma_{k,t}$ and $\rho_{k,t}$, and $\mathcal B_{\rm KL}(b_t^{\rm step})$ enforces the step allowance.  The map from Actuator coordinates to the kernel is nonlinear, so DIAL recomputes the exact KL of the executed kernel and backtracks toward the native action until both the step allowance and cumulative budget hold.  This gives a Gibbs-motivated local approximation within the Actuator-reachable family rather than an exact global solution of Eq.~\eqref{eq:control_problem}.

\subsection{Learning and Closed-Loop Inference}

DIAL learns only the two predictions required by the local solver: reliability demand from the Estimator and action-conditioned future cost from the Q-Critic.  Native weather ensembles paired with verifying analyses provide the Estimator's signed reliability labels.  Normalization and low-rank bases are fitted only on the training split, and the Q-Critic receives out-of-fold Estimator predictions rather than demand estimates fitted to the same forecast cases.  To train the Q-Critic, each pair starts from the same weather state and matched random innovations: one branch keeps the native noise settings, whereas the other executes an exploratory adjustment.  The remaining-forecast score and moment contributions supervise the two heads.  Differences between the paired branches isolate the action effect and check whether the learned action gradient points toward lower cost.  After each pooled training batch, DIAL increases the reliability multiplier $\eta$ when the pooled reliability constraint is violated and increases the fidelity multiplier $\lambda_{\rm KL}$ when the Path-KL budget is exceeded.  The signed moment weights $\xi$ are updated from the pooled reliability residual rather than from individual forecast cases.  After validation, DIAL freezes the selected checkpoint, dual variables, action bounds, and budget rules; all training-fitted normalizers and low-rank bases also remain fixed.

At inference, all learned components and control rules are frozen, and no verifying weather is available.  The Estimator runs once from the pre-forecast history.  At each forecast step, the Controller uses the lead-masked demand, live Pilot state, recurrent memory, and remaining KL ledger to compute the next feasible action, as summarized in Algorithm~\ref{alg:dial}.

\begin{algorithm}[t]
\caption{DIAL closed-loop inference.}
\label{alg:dial}
\begin{algorithmic}[1]
\REQUIRE Frozen $F_{\rm wx},E_\psi,Q_\phi,K_0,B_r,\bar R$, Actuator, duals, and budget rules; $H^{\rm pre},x_0,\beta_{\rm path}$
\STATE Predict $\widehat r\leftarrow\bar R+B_rE_\psi(H^{\rm pre})$; initialize scored members, Pilot, memory, and the KL ledger
\FOR{$t=0,\ldots,T-1$}
    \STATE Mask $\widehat r_{t:}\leftarrow M_t\widehat r$ and form $s_t$ by Eq.~\eqref{eq:forecast_visible_state}
    \STATE Compute $\nabla_aQ_{\phi,\xi}(s_t,0)$ and $F_{\rm KL}(s_t)$; obtain $a_t^{\rm nat}$ by Eq.~\eqref{eq:natural_proposal}
    \STATE Allocate $b_t^{\rm step}$, project to $a_t^{\rm exec}$, and backtrack with the exact KL of the executed kernel
    \STATE Advance the Pilot and scored members under $K_{a_t^{\rm exec}}$ with independent streams
    \STATE Update memory from the Pilot change and executed action; update the KL ledger
\ENDFOR
\RETURN the $M$ scored ensemble members
\end{algorithmic}
\end{algorithm}

\begin{figure*}[!t]
    \centering
    \includegraphics[width=\textwidth]{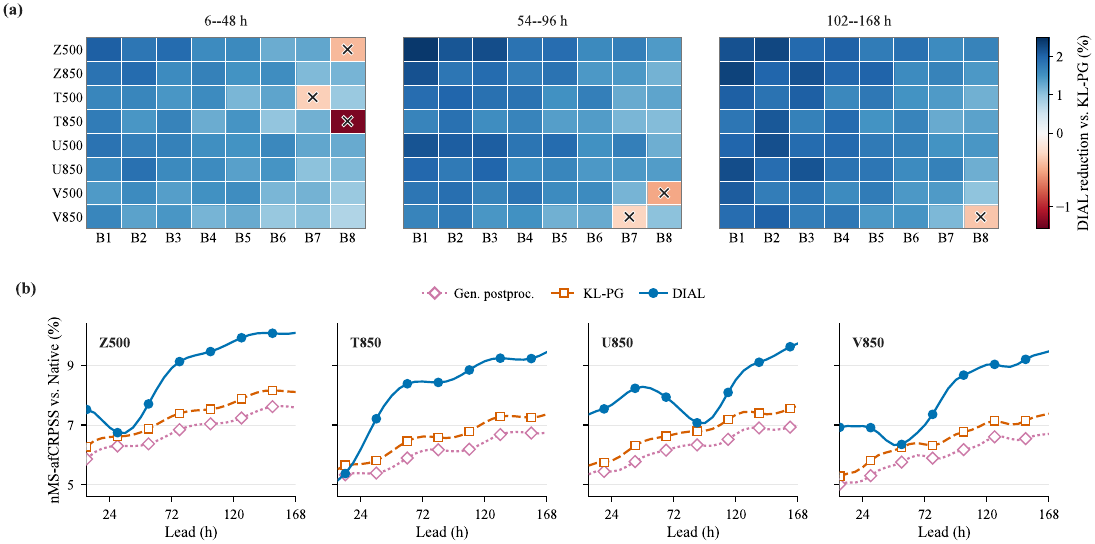}
    \caption{Scale- and lead-resolved nMS-afCRPS.
    (a) DIAL's relative reduction over KL-PG across the reported fields, bands B1--B8
    (largest to smallest), and three lead windows; positive values favor DIAL
    and $\times$ marks nonpositive cells. (b) Native-relative skill across 28
    six-hour leads for DIAL and the strongest output-side and reward-control
    baselines, shown for Z500, T850, U850, and V850. Lead curves for all
    reported fields appear in the supplement.}
    \label{fig:main-results}
\end{figure*}

\begin{figure*}[!t]
    \centering
    \includegraphics[width=\textwidth]{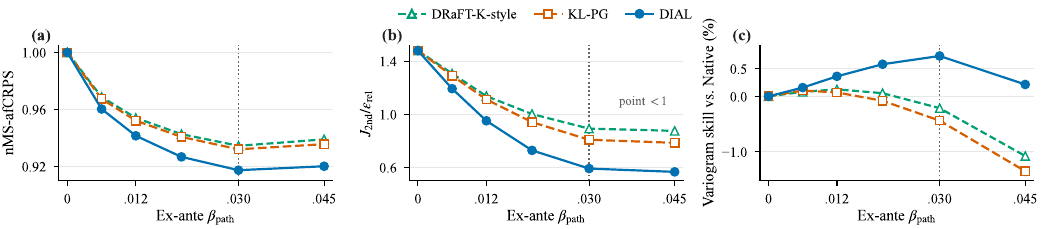}
    \caption{Budget robustness.
    DIAL and the reward-control baselines use the same validation-frozen
    ex-ante Path-KL grid. Panels report (a) nMS-afCRPS, (b) pooled reliability
    $\jsecond/\epsilon_{\rm rel}$, and (c) Native-relative fixed-pair variogram
    skill. Dotted and dashed lines mark the registered budget and point
    reliability threshold, respectively.}
    \label{fig:budget-robustness}
\end{figure*}

\section{Experiments}

\subsection{Experimental Setup}

\paragraph{Data and Forecast Task.}
We evaluate DIAL on frozen FCN3 forecasts initialized and verified with ERA5. DIAL applies native-noise control across all FCN3 variables throughout each rollout. We evaluate geopotential, temperature, and both wind components at 500 and 850 hPa, and include all eight fields in every aggregate. Scores cover eight spectral bands and 28 six-hour leads through 168 h. Training uses 240 initializations from 2017--2020, and validation uses 60 from 2021. The sealed test set contains 60 initializations from 2022, each with 30 ensemble members. We freeze all learned components, metric choices, action bounds, budgets, and statistical rules before test evaluation. Checkpoint, initialization sampling, preprocessing, and regridding details appear in the supplement.

\paragraph{Baselines.}
We compare seven ensemble methods with equal member weights. Native FCN3 is the unmodified reference. Three baselines act after the ensemble forecast is generated. Cond-SP uses forecast-visible state and estimated reliability demand to correct spread by variable, band, and lead while preserving the ensemble mean. PoET is retrained on Native FCN3 ensembles and retains its member-set processing and ensemble size \cite{benbouallegue2024poet}. Generative multivariate postprocessing is adapted to sample complete forecast paths \cite{chen2024generativepostprocessing}. DIAL is compared with two generation-time controllers. DRaFT-K-style and KL-PG share DIAL's forecast-visible state, native-noise actions, legal set, objective, and training budget. All three use the same ex-ante Path-KL budget, chosen on validation. DRaFT-K-style differentiates through the final $K$ steps, with $K$ selected on validation \cite{clark2024draft}. KL-PG uses at most a capacity-matched scalar value baseline and excludes DIAL's action-conditioned Q-Critic and KL-natural solver \cite{black2024ddpo,fan2023dpok}. Detailed information, compute, and output-distortion ledgers appear in the supplement.

\paragraph{Evaluation Protocol.}
nMS-afCRPS with $\alpha=0.95$ is primary, with training-frozen normalizers, bands, and weights \cite{lang2026aifscrps}. We also report full-field fCRPSS, Native-relative ensemble-mean RMSE improvement, RMS($C$) with $C=-2\log\mathrm{SSR}_{\rm finite\text{-}M}$, $\jsecond/\epsilon_{\rm rel}$, and realized Path-KL; further diagnostics appear in the supplement. Dates, not members, are the statistical units. Two-sided 95\% confidence intervals resample paired date blocks; common member seeds preserve method pairing. Each learned method uses three training seeds. At the validation-frozen budget, one-sided contrasts with the four learned baselines use Holm correction at familywise level 0.05.

\subsection{Main Results}

\begin{table}[!t]
\centering
\captionsetup{skip=7pt,belowskip=-2pt}
\small
\setlength{\tabcolsep}{2.2pt}
\renewcommand{\arraystretch}{1.05}
\begin{tabular}{@{}lrrrr@{}}
\toprule
Method & \shortstack{nMS-\\afCRPS $\downarrow$}
& \shortstack{fCRPSS\\(\%) $\uparrow$}
& \shortstack{RMSE gain\\(\%) $\uparrow$}
& \shortstack{RMS($C$)\\$\downarrow$} \\
\midrule
Native FCN3 & 1.0000 & 0.000 & 0.000 & 0.4353 \\
Cond-SP & 0.9681 & 2.737 & 0.018 & 0.3207 \\
\midrule
PoET & 0.9453 & 4.824 & 1.274 & 0.2491 \\
Gen.\ postproc. & 0.9370 & 5.913 & \textbf{2.413} & 0.2278 \\
\midrule
DRaFT-K-style & 0.9345 & 5.663 & 1.537 & 0.2436 \\
KL-PG & 0.9320 & 6.050 & 1.780 & 0.2310 \\
\textbf{DIAL} & \textbf{0.9173} & \textbf{7.543} & 2.117 & \textbf{0.1794} \\
\bottomrule
\end{tabular}
\caption{Aggregate equal-weight ensemble performance.
DIAL applies native-noise control across all FCN3 variables, while scores
aggregate all eight evaluated fields ($Z$, $T$, $U$, and $V$ at 500 and 850 hPa).
nMS-afCRPS uses $\alpha=0.95$ and Native normalization;
fCRPSS and RMSE improvement are relative to Native FCN3; RMS($C$)=0 is ideal.}
\label{tab:main-results}
\end{table}

\paragraph{Overall Forecast Quality.}
At the validation-frozen budget, DIAL reduces nMS-afCRPS by 8.27\% from Native FCN3 and by 1.57\% from KL-PG, the strongest baseline (95\% CI [0.05, 3.10]\%; Holm-adjusted $p=0.022$). All four are significant. DIAL has the highest fCRPSS and lowest RMS($C$), but generative postprocessing gives the largest improvement in ensemble-mean RMSE.

\paragraph{Variation across Scale and Lead.}
The aggregate gain is broad but not uniform. Figure~\ref{fig:main-results}(a) favors DIAL in 186 of 192 reported field--band--window cells, with six local exceptions concentrated at the shortest spatial scales. The representative lead curves in Figure~\ref{fig:main-results}(b) show the same advantage over most of the seven-day forecast. The two short-range T850 points do not improve over KL-PG, so the result does not rely on a claim of universal cellwise dominance.

\paragraph{Reliability within the Native-Law Budget.}

At the registered ex-ante budget $\beta_{\rm path}=0.030$, DIAL realizes a Path-KL of 0.0259 and gives $\jsecond/\epsilon_{\rm rel}=0.591$, with a one-sided 95\% upper confidence bound of 0.79. Since this bound is below one, DIAL passes the reliability gate at the same operating point, while ensemble-mean RMSE remains better than Native. We examine budget effects below.

\subsection{Ablation Studies}

\paragraph{Budget Robustness.}
Figure~\ref{fig:budget-robustness} varies the ex-ante budget while holding all components fixed. DIAL gives lower nMS-afCRPS than both reward-control baselines at every nonzero budget and crosses the point threshold at a smaller budget. Beyond the registered point, additional budget lowers $\jsecond$ but worsens nMS-afCRPS and variogram skill.

\begin{table}[!t]
\centering
\captionsetup{skip=7pt,belowskip=-2pt}
\footnotesize
\setlength{\tabcolsep}{1.4pt}
\renewcommand{\arraystretch}{1.02}
\begin{tabular}{@{}lrrrr@{}}
\toprule
Variant & \shortstack{nMS-\\afCRPS $\downarrow$} & \shortstack{RMSE gain\\(\%) $\uparrow$} & \shortstack{$\jsecond/$\\$\epsilon_{\rm rel}$ $\downarrow$} & \shortstack{Realized\\Path-KL} \\
\midrule
\multicolumn{5}{@{}l}{\textit{(A) Forecast-visible information}} \\
No Estimator & 0.9449 & 1.084 & 0.874 & 0.0238 \\
Matched stale-weather & 0.9283 & 1.736 & 0.719 & 0.0269 \\
\textbf{Full DIAL} & 0.9173 & 2.117 & 0.591 & 0.0259 \\
\midrule
\multicolumn{5}{@{}l}{\textit{(B) Control-mechanism failures}} \\
Myopic future cost & 0.9286 & 1.423 & 0.772 & 0.0246 \\
Euclidean update & 0.9252 & 1.834 & 0.682 & 0.0291 \\
No $\jsecond$ & 0.9133 & 2.284 & 1.137$^\dagger$ & 0.0267 \\
\textbf{Full DIAL} & 0.9173 & 2.117 & 0.591 & 0.0259 \\
\bottomrule
\end{tabular}
\caption{Ablations at the registered Path-KL budget.
Panel A tests forecast-visible information; Panel B tests control mechanisms.
Ratios are point estimates; $\dagger$ marks values above $1$. Formal
feasibility uses a one-sided 95\% upper confidence bound.
Metrics use each row's final executed kernel.}
\label{tab:ablations}
\end{table}

\paragraph{Forecast-Visible Information.}
Panel A of Table~\ref{tab:ablations} separates case-specific demand from live weather feedback. Replacing Estimator demand with a training-only climatology or reusing the initial weather token at every step degrades nMS-afCRPS relative to full DIAL. Both variants retain the Controller, legal action space, and ex-ante budget. These results support both information sources without asserting an interaction.

\paragraph{Control-Mechanism Failures.}
Panel B isolates three mechanisms. A myopic cost degrades 102--168 h nMS skill, matching the preregistered long lead signature. This supports modeling how early actions affect later forecasts. The Euclidean update uses more Path-KL yet gives worse nMS-afCRPS, supporting the native Fisher geometry. Removing $\jsecond$ improves the primary score, but the reliability ratio rises above one. A better primary score alone therefore does not satisfy the pooled-reliability criterion.

\section{Conclusion}

DIAL adapts ensemble stochasticity during generation rather than calibrating completed forecasts. It converts signed reliability gaps across scales into future costs and adjusts native noise through KL-natural updates within a Path-KL budget. On frozen FCN3, DIAL improves multiscale skill and pooled reliability over learned postprocessing and reward-based control at the registered budget, although not every metric improves. Its local control is limited to the Actuator's noise coordinates and requires a Pilot forecast and multi-member scoring. DIAL therefore provides an auditable reliability-aware stochastic closure without retraining the forecaster.

\clearpage
\setcounter{secnumdepth}{2}
\setcounter{section}{0}
\setcounter{subsection}{0}
\setcounter{equation}{0}
\setcounter{figure}{0}
\setcounter{table}{0}
\numberwithin{equation}{section}
\renewcommand{\thefigure}{S\arabic{figure}}
\renewcommand{\thetable}{S\arabic{table}}

\section*{Scope and Reading Guide}

The main paper states the complete problem, method, and evidence required for
its central claims. This document supplies derivations, protocol definitions,
and extended diagnostic views. Sections~A and B formalize the pooled
reliability objective, ideal path law, and forecast-visible approximation.
Section~C fixes the comparison and aggregation rules used by the extended
results. Sections~D--F report complete field coverage, signed reliability
response, budget behavior, and mechanism audits. The notation follows the main
paper; repeated definitions are included where they are needed to make a proof
or table independently readable.

\appendix

\section{Pooled Reliability and the Ideal Path Law}
\label{sec:supp-ideal-law}

\subsection{Augmented Joint-Path Setup and Assumptions}

Let $H$ denote the weather information available when a forecast is
initialized and let $Y_{1:T}$ be the verifying weather sequence. The frozen
forecaster advances scored member $m$ according to
\begin{equation}
X_{t+1}^{(m)}=F_{\rm wx}\!\left(X_t^{(m)},Z_{t+1}^{(m)}\right),
\qquad Z_{t+1}^{(m)}\sim K_0(\cdot\mid H_t^{(m)}).
\label{eq:supp-native-transition}
\end{equation}
The $M\geq2$ scored paths form $\Omega^{\rm eval}$. A Pilot path
$\Omega^{\rm pilot}$ driven by an independent random stream supplies live
forecast information but never enters an ensemble score. The theoretical path
variable is therefore
$\widetilde\Omega=(\Omega^{\rm eval},\Omega^{\rm pilot})$. Its conditional
native law is $P_0(\cdot\mid H)$, and legal Actuator updates induce a reachable
family $\mathcal P_{\mathcal A}$ of controlled laws.

The derivations use the following conditions.

\begin{assumption}[Information and conditional sampling]
The native and controlled systems share the law of $(H,Y)$; denote their common
history marginal by $\mu_H$. Every admissible deployed policy is a deterministic
measurable function of the forecast filtration generated by $H$ and the
forecast paths observed so far; stochasticity comes from the native scored and
Pilot innovations already contained in $\widetilde\Omega$. Conditional on
$H$, all scored and Pilot innovation streams are sampled independently of the
verifying weather $Y$. Consequently, for every admissible conditional path law,
\begin{equation}
P(d\omega,dy\mid H=h)=P(d\omega\mid H=h)Q(dy\mid H=h),
\label{eq:supp-conditional-factorization}
\end{equation}
where $Q$ is the common verifying-weather kernel. Neither the Pilot nor the
Controller observes $Y$ at inference.
\label{assump:information}
\end{assumption}

\begin{assumption}[Absolute continuity]
Every considered controlled transition is absolutely continuous with respect
to the corresponding native augmented transition. Consequently,
$P\ll P_0$ for every $P\in\mathcal P_{\mathcal A}$ with finite Path-KL.
\label{assump:absolute-continuity}
\end{assumption}

\begin{assumption}[Moment and exponential integrability]
The multiscale score and moment residuals are absolutely integrable under every
law in the variational domain. For each fixed dual vector $\xi$ and positive
temperature $\bar\lambda_{\rm KL}$ considered below, the following conditions
hold for $\mu_H$-almost every $h$; no parameter-uniform common null set is
claimed. The conditional potential $\overline{\mathcal C}_\xi(h,\cdot)$ is
finite $P_0(\cdot\mid h)$-almost surely and
\begin{equation}
0<\mathcal Z_\xi(h)<\infty,
\qquad
\E_{P_0(\cdot\mid h)}\!\left[
e^{-\overline{\mathcal C}_\xi/\bar\lambda_{\rm KL}}
\bigl|\overline{\mathcal C}_\xi\bigr|\right]<\infty.
\label{eq:supp-tilted-integrability}
\end{equation}
For each such $(h,\xi,\bar\lambda_{\rm KL})$, the variational domain is exactly
\begin{equation}
\mathfrak D_{\xi}(h)=
\left\{P_h\ll P_{0,h}\ \middle|\ 
\begin{aligned}
\KL(P_h\Vert P_{0,h})&<\infty,\\
\E_{P_h}|\overline{\mathcal C}_\xi(h,\Omega)|&<\infty
\end{aligned}
\right\}.
\label{eq:supp-variational-domain}
\end{equation}
Thus the exponentially tilted law belongs to the domain under
Equation~\eqref{eq:supp-tilted-integrability}; the domain is not an additional
restricted model family.
\label{assump:integrability}
\end{assumption}

\begin{assumption}[Frozen reliability geometry]
The cell-weight matrix $W$ is symmetric positive definite. All projections,
normalizers, cell weights, and the Path-KL normalization $\nu_{\rm KL}>0$ are
fitted before evaluation and then held fixed.
\label{assump:frozen-geometry}
\end{assumption}

Regard $P$ and $P_0$ as joint laws on $(H,\widetilde\Omega)$ with the same
$H$-marginal. For $P\in\mathcal P_{\mathcal A}$, the conditional chain rule
then gives the normalized native-law deviation
\begin{equation}
\begin{aligned}
D_H(P\Vert P_0)
&=\KL\!\left(P(\cdot\mid H)\Vert P_0(\cdot\mid H)\right),\\
\bar D_{\rm path}(P\Vert P_0)
&\equiv\nu_{\rm KL}^{-1}\KL(P\Vert P_0)
=\nu_{\rm KL}^{-1}\E_H[D_H(P\Vert P_0)].
\end{aligned}
\label{eq:supp-normalized-path-kl}
\end{equation}
This quantity measures deviation after legality has been enforced. It does not
replace stability, support, covariance, or balance constraints imposed by the
Actuator.

\subsection{Finite-Member Scores and the Signed Spread--Error Gap}

For a scalar forecast ensemble $x_{1:M}$ and observation $y$, the
almost-fair score is
\begin{equation}
\begin{aligned}
\ell_{\alpha}(x_m,x_{m'};y)
={}&|x_m-y|+|x_{m'}-y|\\
&-\left(1-\frac{1-\alpha}{M}\right)|x_m-x_{m'}|,\\
\afcrps_{\alpha}(x_{1:M},y)
={}&\frac{1}{M(M-1)}\sum_{m<m'}
\ell_{\alpha}(x_m,x_{m'};y).
\end{aligned}
\label{eq:supp-afcrps}
\end{equation}
Equivalently,
$\afcrps_{\alpha}=\alpha\fcrps+(1-\alpha)\crps$, where $\fcrps$
uses the finite-ensemble unbiased member-pair coefficient
\cite{ferro2014fairscores,leutbecher2019ensemblesize}. The reported primary
score fixes $\alpha=0.95$. For variable $v$, lead $\tau$, and fixed spectral
projection $\Pi_b$, the multiscale objective is
\begin{equation}
\begin{aligned}
L_{v,\tau}={}&\omega_0\afcrps(X_{1:M,v,\tau},Y_{v,\tau})\\
&+\sum_b\omega_b\afcrps(
\Pi_bX_{1:M,v,\tau},\Pi_bY_{v,\tau}),\\
\mathcal S_{\rm MS}
={}&\sum_{v,\tau}w_{v,\tau}L_{v,\tau}.
\end{aligned}
\label{eq:supp-ms-afcrps}
\end{equation}
Here $\alpha\in[0,1]$; all $\omega_0$, $\omega_b$, and $w_{v,\tau}$ are
nonnegative, with nonzero total weight at each aggregation level. The weights
and projections are frozen before evaluation. Under these conditions the
objective is a weighted collection of proper marginal or projected scores; it
is not claimed to identify the complete spatiotemporal joint law.

The signed mechanism diagnostic uses the same pooling weights for spread and
ensemble-mean error. For a diagnostic cell $c$, let $\langle\cdot\rangle_c$
denote its normalized frozen average over initializations and valid projected
coordinates, and define
\begin{equation}
\begin{aligned}
E_{{\rm err},c}
&=\left\langle(\overline X-Y)^2\right\rangle_c,\\
E_{{\rm spread},c}^{\rm sample}
&=\left\langle\frac{1}{M-1}\sum_{m=1}^M
(X_m-\overline X)^2\right\rangle_c,\\
E_{{\rm spread},c}^{\rm fair}
&=\frac{M+1}{M}E_{{\rm spread},c}^{\rm sample},\\
C_c&=\log E_{{\rm err},c}-\log E_{{\rm spread},c}^{\rm fair}.
\end{aligned}
\label{eq:supp-signed-c}
\end{equation}
Subsequent formulas suppress the cell subscript. The diagnostic is reported
only for cells with
$E_{{\rm err},c}>0$ and $E_{{\rm spread},c}^{\rm sample}>0$; degenerate cells
are outside this logarithmic diagnostic rather than assigned an implicit
regularizer.
Thus
$C=-2\log\mathrm{SSR}_{\rm finite\text{-}M}$. Positive $C$ means that
error energy exceeds effective spread energy and therefore calls for more
spread; negative $C$ calls for less. If an intervention changes log error and
log spread amplitudes by $\delta_e$ and $\delta_s$, respectively, then
\begin{equation}
C_{\rm after}=C_{\rm native}+2\delta_e-2\delta_s.
\label{eq:supp-c-response-identity}
\end{equation}
This identity is used in every signed-response audit below.

\subsection{Population Reliability Must Be Pooled Before Squaring}

For initialization $i$ and cell $j$, let $\widetilde X_{imj}$ and
$\widetilde Y_{ij}$ be forecast and truth coordinates normalized by a common
training reference, with
$\overline X_{ij}=M^{-1}\sum_m\widetilde X_{imj}$. The three moment rows are
\begin{equation}
\begin{aligned}
h^{\rm mean}_{ij}
&=\overline X_{ij}-\widetilde Y_{ij},\\
h^{\rm second}_{ij}
&=M^{-1}\sum_m\widetilde X_{imj}^2-\widetilde Y_{ij}^2,\\
h^{\rm pred}_{ij}
&=\frac{\sum_{m\ne m'}\widetilde X_{imj}\widetilde X_{im'j}}
{M(M-1)}-\overline X_{ij}\widetilde Y_{ij}.
\end{aligned}
\label{eq:supp-moment-rows}
\end{equation}
The predictability row uses ordered member pairs and excludes diagonal terms.
Stacking every valid row gives $h_i\in\R^D$. Equivalently, let
$r_{\rm mom}(h,\omega,y)$ denote that stacked random vector before a case is
realized. DIAL first pools signed residuals over the common history, controlled
forecast, and verifying-weather laws,
\begin{equation}
\begin{aligned}
g(P)&=\int\mu_H(dh)\int P(d\omega\mid h)\\
&\qquad{}\times\int Q(dy\mid h)\,r_{\rm mom}(h,\omega,y),\\
\jsecond(P)&=g(P)^\top Wg(P).
\end{aligned}
\label{eq:supp-pooled-reliability}
\end{equation}
Absolute integrability in Assumption~\ref{assump:integrability} makes the
iterated integral and the equivalent expectation order well defined.
In general, $\jsecond(P)\ne\E_i[h_i^\top Wh_i]$. The right-hand side is a
casewise risk and would penalize weather-to-weather variability even when the
population moment is correct. Equation~\eqref{eq:supp-pooled-reliability}
instead expresses a population reliability constraint.

\subsection{Quadratic Dualization and the Gibbs Law}

\begin{proposition}[Quadratic reliability dual]
Under Assumption~\ref{assump:frozen-geometry}, for $\eta>0$ and any
$g\in\R^D$,
\begin{equation}
\eta g^\top Wg
=\sup_{\xi\in\R^D}\left\{
\xi^\top g-\frac{1}{4\eta}\xi^\top W^{-1}\xi
\right\},
\label{eq:supp-quadratic-dual}
\end{equation}
and the unique maximizer is $\xi^*=2\eta Wg$.
\label{prop:quadratic-dual}
\end{proposition}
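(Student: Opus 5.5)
The plan is to treat the right-hand side of the proposition as an unconstrained maximization of a strictly concave quadratic in $\xi$ and solve it through its first-order condition. By Assumption~\ref{assump:frozen-geometry}, $W$ is symmetric positive definite. Hence $W^{-1}$ exists and is also symmetric positive definite. Define
\[
\phi(\xi)=\xi^\top g-\frac{1}{4\eta}\xi^\top W^{-1}\xi .
\]
Since $\eta>0$, the Hessian of $\phi$ is $-\frac{1}{2\eta}W^{-1}\prec0$. So $\phi$ is strictly concave on $\R^D$.

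Strict concavity gives existence as well. Because $-\phi$ has a positive definite Hessian, $-\phi(\xi)\to+\infty$ as $\lVert\xi\rVert\to\infty$, so $\phi$ attains its supremum. Any stationary point of a strictly concave function is its unique global maximizer, which makes the supremum a maximum at that point.

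The key steps are the following.
\begin{enumerate}
\item Compute the gradient $\nabla\phi(\xi)=g-\frac{1}{2\eta}W^{-1}\xi$, using the symmetry of $W^{-1}$.
\item Set it to zero to obtain $\xi^*=2\eta Wg$.
\item Substitute $\xi^*$ back into $\phi$:
\[
\phi(\xi^*)=2\eta g^\top Wg-\frac{1}{4\eta}(2\eta)^2 g^\top W W^{-1}Wg=2\eta g^\top Wg-\eta g^\top Wg=\eta g^\top Wg .
\]
\end{enumerate}
This yields Eq.~\eqref{eq:supp-quadratic-dual} and identifies the maximizer.

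An alternative route avoids calculus and shows uniqueness directly. Complete the square:
\[
\phi(\xi)=\eta g^\top Wg-\frac{1}{4\eta}(\xi-2\eta Wg)^\top W^{-1}(\xi-2\eta Wg).
\]
The subtracted term is nonnegative and vanishes only at $\xi=2\eta Wg$, because $W^{-1}\succ0$. I would present this identity as the main argument, since it delivers the value, the maximizer, and its uniqueness at once. Verifying it requires expanding the quadratic and using $W^\top=W$ to merge the two cross terms $\xi^\top g$.

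No step poses a real obstacle; the only points needing care are the following. First, both symmetry and positive definiteness of $W$ are actually used: symmetry for the gradient and cross-term simplification, and definiteness for uniqueness. Second, the condition $\eta>0$ is essential: for $\eta\le0$ the problem is either undefined or its supremum is infinite.
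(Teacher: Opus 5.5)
Your proposal is correct, and the argument you would present as the main one --- completing the square in $\xi-2\eta Wg$ so that the objective equals $\eta g^\top Wg$ minus a $W^{-1}$-weighted nonnegative quadratic that vanishes only at $\xi=2\eta Wg$ --- is exactly the paper's proof. The first-order-condition route is a valid but redundant cross-check. In that route, existence comes from the Hessian being a fixed negative definite matrix (coercivity), not from strict concavity alone, which is the reason you in effect give.
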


\begin{proof}
Writing $\Delta=\xi-2\eta Wg$, completing the square gives
\begin{equation}
\begin{aligned}
\xi^\top g-\frac{1}{4\eta}\xi^\top W^{-1}\xi
={}&\eta g^\top Wg\\
&-\frac{1}{4\eta}\Delta^\top W^{-1}\Delta.
\end{aligned}
\end{equation}
Positive definiteness makes the last term nonpositive and zero only at
$\xi=2\eta Wg$.
\end{proof}

The notation $r_{\rm mom}$ also separates the residual vector from a fixed
history value $h$. For fixed $\xi$, define the joint-ensemble cost
$\mathcal C_\xi=\mathcal S_{\rm MS}+\xi^\top r_{\rm mom}$. Since future truth is not
available to the deployed controller, its forecast-side potential is
\begin{equation}
\begin{aligned}
\overline{\mathcal C}_\xi(h,\omega)
&=\int\mathcal C_\xi(\omega,y)Q(dy\mid h)\\
&=\E_{Y\sim Q(\cdot\mid h)}[\mathcal C_\xi(\omega,Y)].
\end{aligned}
\label{eq:supp-conditional-cost}
\end{equation}
By absolute integrability, averaging this cost under $(\mu_H,P,Q)$ gives
$\E_P[\mathcal S_{\rm MS}]+\xi^\top g(P)$.
Equation~\eqref{eq:supp-conditional-factorization} is what permits this
truth-only conditional expectation while retaining $\omega$ as the argument to
be tilted.

\begin{proposition}[Ideal conditional path law]
Suppose Assumptions~\ref{assump:information}--\ref{assump:integrability}
hold and let $\bar\lambda_{\rm KL}>0$. For $\mu_H$-almost every fixed history
$H=h$, among all laws $P_h\in\mathfrak D_\xi(h)$, the unique
minimizer of
\begin{equation}
\E_{P_h}[\overline{\mathcal C}_\xi(h,\Omega)]
+\bar\lambda_{\rm KL}\KL(P_h\Vert P_{0,h})
\label{eq:supp-kl-regularized-objective}
\end{equation}
has density
\begin{equation}
\begin{aligned}
\frac{dP_{\xi,h}^*}{dP_{0,h}}(\omega)
&=\frac{\exp[-\overline{\mathcal C}_\xi(h,\omega)/
\bar\lambda_{\rm KL}]}{\mathcal Z_\xi(h)},\\
\mathcal Z_\xi(h)
&=\E_{P_{0,h}}\exp[-\overline{\mathcal C}_\xi(h,\Omega)/
\bar\lambda_{\rm KL}].
\end{aligned}
\label{eq:supp-gibbs-law}
\end{equation}
For any restricted conditional family
$\mathcal P_{\mathcal A}(h)\cap\mathfrak D_\xi(h)$, the regularized objective
and $\KL(P_h\Vert P_{\xi,h}^*)$ have the same infimizers whenever the infimum is
attained. More precisely, the restricted infima obey
\begin{equation}
\inf_{P_h} J_h(P_h)
=\bar\lambda_{\rm KL}\inf_{P_h}\KL(P_h\Vert P_{\xi,h}^*)
-\bar\lambda_{\rm KL}\log\mathcal Z_\xi(h),
\label{eq:supp-restricted-infima}
\end{equation}
where $J_h$ denotes Equation~\eqref{eq:supp-kl-regularized-objective} and both
infima range over the same restricted domain. No attainment or
global optimality is asserted for a nonclosed, nonconvex reachable family.
\label{prop:gibbs-law}
\end{proposition}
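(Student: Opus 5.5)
The argument is the standard Donsker--Varadhan/Gibbs variational identity, carried out pointwise in the history $h$. Fix $h$ outside the $\mu_H$-null set on which Assumption~\ref{assump:integrability} may fail, and abbreviate $\overline{\mathcal C}=\overline{\mathcal C}_\xi(h,\cdot)$, $\lambda=\bar\lambda_{\rm KL}$, $\mathcal Z=\mathcal Z_\xi(h)$. By Equation~\eqref{eq:supp-tilted-integrability}, $0<\mathcal Z<\infty$, so $P^*_{\xi,h}$ in Equation~\eqref{eq:supp-gibbs-law} is a well-defined probability measure equivalent to $P_{0,h}$. Its density is strictly positive wherever $\overline{\mathcal C}$ is finite, which holds $P_{0,h}$-almost surely.

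The first step is to check that $P^*_{\xi,h}\in\mathfrak D_\xi(h)$. The second integrability condition in Equation~\eqref{eq:supp-tilted-integrability} gives $\E_{P^*}|\overline{\mathcal C}|<\infty$ directly. The log-density is $\log(dP^*/dP_{0,h})=-\overline{\mathcal C}/\lambda-\log\mathcal Z$. Hence $\KL(P^*\Vert P_{0,h})=-\E_{P^*}[\overline{\mathcal C}]/\lambda-\log\mathcal Z$, which is finite.

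The key identity comes next. Take any $P_h\in\mathfrak D_\xi(h)$. Since $P_h\ll P_{0,h}\sim P^*$, we also have $P_h\ll P^*$, and the chain rule for densities gives
\[
\log\frac{dP_h}{dP_{0,h}}=\log\frac{dP_h}{dP^*}-\frac{\overline{\mathcal C}}{\lambda}-\log\mathcal Z
\qquad P_h\text{-a.s.}
\]
Integrate under $P_h$. The terms $\E_{P_h}|\overline{\mathcal C}|$ and $\KL(P_h\Vert P_{0,h})$ are finite by membership in the domain, so rearranging is legitimate and yields $\KL(P_h\Vert P^*)<\infty$ together with
\[
J_h(P_h)=\E_{P_h}[\overline{\mathcal C}]+\lambda\KL(P_h\Vert P_{0,h})
=\lambda\KL(P_h\Vert P^*)-\lambda\log\mathcal Z .
\]
The main technical care is needed exactly here. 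The integral of the log density ratio has to be split into three separately integrable pieces, which requires that the positive and negative parts of $\log(dP_h/dP^*)$ be controlled. I would handle this by writing $\log(dP_h/dP^*)$ as the sum of the finite-mean term $\log(dP_h/dP_{0,h})$ and the integrable term $\overline{\mathcal C}/\lambda+\log\mathcal Z$. This shows it is $P_h$-quasi-integrable with finite mean, so its expectation equals the KL divergence.

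Everything else follows from this identity. Gibbs' inequality gives $\KL(P_h\Vert P^*)\ge0$, with equality iff $P_h=P^*$. Therefore $J_h\ge-\lambda\log\mathcal Z$, with equality exactly at $P^*$, which proves both minimality and uniqueness over $\mathfrak D_\xi(h)$. For a restricted family $\mathcal P_{\mathcal A}(h)\cap\mathfrak D_\xi(h)$, the identity holds termwise for every member, and $J_h$ differs from $\lambda\KL(\cdot\Vert P^*)$ by a constant independent of $P_h$. Taking infima over the same set gives Equation~\eqref{eq:supp-restricted-infima}; for $\lambda>0$ the two functionals share minimizers whenever either infimum is attained. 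No closedness or convexity of the family enters, consistent with the final disclaimer of the proposition. Assumption~\ref{assump:information} is used only to justify that $\overline{\mathcal C}$ is the correct truth-marginalized potential via Equation~\eqref{eq:supp-conditional-factorization}. The almost-every-$h$ qualifier comes from the null sets in Assumption~\ref{assump:integrability}.
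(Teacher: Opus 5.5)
Your proof is correct and follows the paper's argument. Both substitute the Gibbs density into the Radon--Nikodym derivative to obtain $J_h(P_h)=\bar\lambda_{\rm KL}\KL(P_h\Vert P^*_{\xi,h})-\bar\lambda_{\rm KL}\log\mathcal Z_\xi(h)$ on $\mathfrak D_\xi(h)$, then conclude from nonnegativity of KL and reuse the same identity on the restricted family; your extra bookkeeping (equivalence of $P^*_{\xi,h}$ and $P_{0,h}$, membership of the tilted law in the domain, and $P_h$-integrability of $\log(dP_h/dP^*_{\xi,h})$) only makes explicit what the paper delegates to Assumption~\ref{assump:integrability}.
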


\begin{proof}
Fix an admissible $h$. For any $P_h\ll P_{0,h}$ in the stated domain,
substitute Equation~\eqref{eq:supp-gibbs-law} into the Radon--Nikodym
derivative:
\begin{equation}
\begin{aligned}
\bar\lambda_{\rm KL}\KL(P_h\Vert P_{\xi,h}^*)
={}&\E_{P_h}[\overline{\mathcal C}_\xi(h,\Omega)]\\
&+\bar\lambda_{\rm KL}\KL(P_h\Vert P_{0,h})\\
&+\bar\lambda_{\rm KL}\log\mathcal Z_\xi(h).
\end{aligned}
\label{eq:supp-kl-decomposition}
\end{equation}
The last term is independent of $P_h$. Assumption~\ref{assump:integrability}
makes every term finite on the domain and makes the tilted law admissible.
Nonnegativity of KL proves that $P_{\xi,h}^*$ minimizes
Equation~\eqref{eq:supp-kl-regularized-objective}, with uniqueness up to
$P_{0,h}$-null sets. Applying the same identity on a restricted family proves
the infimum and infimizer statement. This is the Gibbs variational identity
\cite{donsker1975variational,csiszar1975idivergence}.
\end{proof}

For fixed $\xi$, the ideal law specifies a full-path skill-and-moment tilt
relative to the native process. It does not prescribe a terminal marginal and
is therefore not a classical fixed-endpoint bridge
\cite{leonard2014schrodinger}. Proposition~\ref{prop:quadratic-dual} alone does
not establish a saddle-point interchange, an optimal multiplier, or projection
onto the zero-moment reliability set. The reachable family can also be
nonconvex, so the practical controller is a receding local approximation to the
fixed-dual Gibbs objective, not an exact global reliability projection.
For precision, despite the main paper's compact ``I-projection toward
reliability'' shorthand, $P_{\xi,h}^*$ is not literally asserted to be a KL
projection onto a reliability set. The formal result is only the fixed-$\xi$
Gibbs identity above: it does not assert that $P_{\xi,h}^*$ belongs to the
zero-moment reliability set or solves the original constrained problem.

\subsection{Path-KL Accounting and the Local Projection Identity}

\begin{proposition}[Augmented-kernel KL chain rule]
Suppose the native and controlled augmented processes have the same initial
law. Let $\mathcal G_t$ be the complete pre-transition filtration of
$\widetilde\Omega$, including the scored-member states, Pilot and
stochastic-source states, and earlier actions.
If the regular conditional kernels
$K_{0,t}^{\rm aug}(\cdot\mid\mathcal G_t)$ and
$K_{a_t,t}^{\rm aug}(\cdot\mid\mathcal G_t)$ satisfy
Assumption~\ref{assump:absolute-continuity}, then, in the common finite or
extended-value sense,
\begin{equation}
\KL(P\Vert P_0)
=\E_P\sum_{t=0}^{T-1}
\KL\!\left(K_{a_t,t}^{\rm aug}(\cdot\mid\mathcal G_t)
\Vert K_{0,t}^{\rm aug}(\cdot\mid\mathcal G_t)\right).
\label{eq:supp-kl-chain-rule}
\end{equation}
\label{prop:kl-chain-rule}
\end{proposition}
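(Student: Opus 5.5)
The plan is to prove Proposition~\ref{prop:kl-chain-rule} by induction on the horizon $T$, using the standard chain rule for KL divergence between joint laws built from regular conditional kernels. Write $P$ and $P_0$ on the augmented path space as iterated compositions: a common initial law $\pi_0$ for $\widetilde\Omega_0$, followed by the kernels $K^{\rm aug}_{a_t,t}(\cdot\mid\mathcal G_t)$ and $K^{\rm aug}_{0,t}(\cdot\mid\mathcal G_t)$ for $t=0,\dots,T-1$. Assumption~\ref{assump:information} makes the action $a_t$ a deterministic measurable function of $\mathcal G_t$. As a result, each controlled transition is a genuine regular kernel given $\mathcal G_t$, and no extra randomness needs to be accounted for. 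I would first verify that $\mathcal G_{t+1}$ is generated by $\mathcal G_t$ together with the step-$t$ augmented increment. Earlier actions are recorded but are functions of the past, so the filtration is a product-type increasing sequence and the path law factors exactly as this iterated kernel product.

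\textbf{Key step.} Assumption~\ref{assump:absolute-continuity} gives, for $P$-a.e.\ $\mathcal G_t$, a kernel density
$$\rho_t=\frac{dK^{\rm aug}_{a_t,t}(\cdot\mid\mathcal G_t)}{dK^{\rm aug}_{0,t}(\cdot\mid\mathcal G_t)}.$$
These densities can be chosen jointly measurable; I would cite the standard result on measurable Radon--Nikodym derivatives for kernels on standard Borel spaces. Because the initial laws coincide, the product $\prod_{t}\rho_t$ is a version of $dP/dP_0$, so $P\ll P_0$. Taking $\log$ and integrating under $P$ then formally gives
$$\KL(P\Vert P_0)=\E_P\sum_t\log\rho_t .$$
Conditioning the $t$-th summand on $\mathcal G_t$ by the tower property turns it into $\E_P\,\KL\bigl(K^{\rm aug}_{a_t,t}\Vert K^{\rm aug}_{0,t}\bigr)$, which is the claimed identity.

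\textbf{Main obstacle: the extended-value case.} The sum of $\log\rho_t$ need not be integrable, so the formal linearity step is not yet justified. I would handle this inductively with the two-step chain rule $\KL(\mu\otimes K\Vert\mu_0\otimes K_0)=\KL(\mu\Vert\mu_0)+\E_\mu\KL(K\Vert K_0)$, applied to the marginal on $\mathcal G_t$ and the next kernel. This two-step identity holds in $[0,\infty]$ and is proved as follows. Split $\log\rho=\log^+\rho-\log^-\rho$. The negative part is controlled because $\E_{K}[\log^-\rho]\le\E_{K_0}[\rho\log^-\rho]\le e^{-1}$ conditionally, so every conditional KL is a well-defined element of $[0,\infty]$. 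Monotone convergence then applies to the positive parts. Alternatively, one can invoke the Donsker--Varadhan variational formula \cite{donsker1975variational} and the lower semicontinuity of KL. Iterating the two-step rule $T$ times, with the common initial law contributing zero, yields the proposition in the finite or extended-value sense: both sides are finite together or infinite together.
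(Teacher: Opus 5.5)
Your proposal is correct and follows the paper's route. You factor $dP/dP_0$ into a product of full-history transition density ratios using the common initial law, take logarithms, and apply the tower property to each summand; the paper merely cites the standard nonnegative chain rule for the infinite-KL case. You make that case explicit through the inductive two-step identity $\KL(\mu\otimes K\Vert\mu_0\otimes K_0)=\KL(\mu\Vert\mu_0)+\E_\mu\KL(K\Vert K_0)$ and the $e^{-1}$ bound on negative parts, which is a sound way to fill in that citation.
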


\begin{proof}
The path-density ratio factorizes into the finite product of full-history
conditional transition-density ratios because the initial laws coincide.
Taking its logarithm gives a finite sum over forecast steps. Conditional
expectation under $P$, followed by the tower property, converts each integrable
summand into the conditional kernel KL in
Equation~\eqref{eq:supp-kl-chain-rule}; the standard nonnegative extended-value
chain rule covers the infinite-KL case.
\end{proof}

The augmented kernel contains all scored members and the Pilot. Writing a
kernel as $K(\cdot\mid s_t)$ is valid only when $s_t$ is a sufficient controlled
Markov state for $\mathcal G_t$; the accounting theorem itself does not require
that reduction. The joint kernel may be further decomposed memberwise only
when the declared conditional independence structure is valid. Shared actions
and Pilot feedback remain part of the conditioning information even in that
special case. This prevents the Pilot cost or correlated native innovations
from disappearing through an unjustified factorization.

\section{Forecast-Visible Closed-Loop Approximation}
\label{sec:supp-closed-loop}

\subsection{Demand Is Predicted Once; Feedback Updates the Control State}

The Estimator maps pre-forecast history $H^{\rm pre}$ to a signed,
scale-resolved demand field
\begin{equation}
\widehat r=\bar R+B_rE_\psi(H^{\rm pre}).
\label{eq:supp-estimator-demand}
\end{equation}
The decoder and normalization are frozen before evaluation. The Estimator
predicts demand, not an action, and it runs once at initialization. At step
$t$, a deterministic mask removes leads that have already passed:
$\widehat r_{t:}=M_t\widehat r$. Live feedback does not rewrite
$\widehat r$; it enters the cost-to-go state separately.

The forecast-visible state is
\begin{equation}
s_t=(\widehat r_{t:},W_t,t,a_{t-1}^{\rm exec},m_{t-1},b_t^{\rm rem}),
\label{eq:supp-visible-state}
\end{equation}
where $W_t$ is the Pilot weather token, $m_{t-1}$ summarizes earlier Pilot
changes and executed actions, and $b_t^{\rm rem}$ is the exact remaining KL
ledger defined below. None of these quantities uses verifying weather.

\subsection{Action-Conditioned Future Cost}

The Q-Critic separates future multiscale score from low-rank moment
contributions. For each head $\ell\in\{\mathrm{skill},h\}$, zero advantage at
the native action is enforced by
\begin{equation}
Q_\phi^\ell(s,a)=V_\phi^\ell(s)+\widetilde A_\phi^\ell(s,a)
-\widetilde A_\phi^\ell(s,0).
\label{eq:supp-dueling-q}
\end{equation}
The pooled dual then produces the local cost-to-go
\begin{equation}
Q_{\phi,\xi}(s,a)=Q_\phi^{\rm skill}(s,a)
+(B_r^\top\xi)^\top Q_\phi^h(s,a).
\label{eq:supp-dual-q}
\end{equation}
The low-rank representation is a control approximation; the full reliability
functional remains defined in the original cell space. Action conditioning is
essential because the Controller needs
$\nabla_aQ_{\phi,\xi}(s_t,0)$, not only a state-value prediction.

\subsection{KL-Natural Proposal and Exact Feasibility}

For each fixed $(t,\mathcal G_t)$ represented by a sufficient state $s$, assume
the augmented fixed-support kernel family is differentiable in quadratic mean
at $a=0$, its KL is twice differentiable there, and the Q-Critic is
differentiable at the native action. Define exact normalized one-step cost by
\begin{equation}
d_t(a\mid\mathcal G_t)
=\nu_{\rm KL}^{-1}\KL\!\left(
K_{a,t}^{\rm aug}(\cdot\mid\mathcal G_t)
\Vert K_{0,t}^{\rm aug}(\cdot\mid\mathcal G_t)\right).
\label{eq:supp-exact-step-kl}
\end{equation}
Near $a=0$, this cost has the pointwise-in-$(s,t)$ expansion
\begin{equation}
\begin{aligned}
\bar D_{{\rm KL},t}(a\mid s)
&\equiv d_t(a\mid\mathcal G_t)\\
&=\tfrac12a^\top F_{\rm KL}(s)a+o(\lVert a\rVert^2),\\
F_{\rm KL}(s)
&=\left.\nabla_a^2\bar D_{{\rm KL},t}(a\mid s)\right|_{a=0}.
\end{aligned}
\label{eq:supp-local-kl}
\end{equation}
The Fisher matrix comes from the native kernel and Actuator parameterization,
not from the Q-Critic.

\begin{proposition}[Local KL-natural step]
Let $q_t=\nabla_aQ_{\phi,\xi}(s_t,0)$ and
$G_t=\lambda_{\rm KL}F_{\rm KL}(s_t)+\gamma I\succ0$. The unique minimizer of
the local unconstrained model
\begin{equation}
q_t^\top a+\tfrac12a^\top G_ta
\label{eq:supp-local-quadratic}
\end{equation}
is $a_t^{\rm nat}=-G_t^{-1}q_t$. If $\mathcal F_t^{\rm quad}$ is a nonempty
closed convex local feasible set, its constrained quadratic minimizer is
\begin{equation}
\begin{aligned}
\mathcal B_t^{\rm quad}
&=\left\{a:\tfrac12a^\top F_{\rm KL}(s_t)a
\leq b_t^{\rm step}\right\},\\
\mathcal F_t^{\rm quad}
&=\mathcal A_{\rm legal}^{\rm loc}\cap\mathcal B_t^{\rm quad},\\
a_t^{\rm quad}
&=\Pi_{\mathcal F_t^{\rm quad}}^{G_t}(a_t^{\rm nat}).
\end{aligned}
\label{eq:supp-local-projection}
\end{equation}
\label{prop:local-step}
\end{proposition}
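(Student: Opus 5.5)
The argument is finite-dimensional convex analysis: the unconstrained claim comes from completing the square in the $G_t$-metric, and the constrained claim comes from recognising the quadratic objective as a metric distance. Set $\phi_t(a)=q_t^\top a+\tfrac12 a^\top G_t a$. Because $F_{\rm KL}(s_t)$ is the Hessian of a KL divergence at its minimum $a=0$, it is symmetric positive semidefinite. With $\lambda_{\rm KL}>0$ and $\gamma>0$, the hypothesis $G_t\succ0$ holds, and $\phi_t$ is strictly convex.

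\textbf{Step 1 (unconstrained minimizer).} Write $a=a_t^{\rm nat}+\Delta$ with $a_t^{\rm nat}=-G_t^{-1}q_t$. Expanding gives
\begin{equation*}
\phi_t(a)=-\tfrac12 q_t^\top G_t^{-1}q_t+\tfrac12\Delta^\top G_t\Delta .
\end{equation*}
This follows the same completing-the-square pattern as the proof of Proposition~\ref{prop:quadratic-dual}. Positive definiteness then shows that $\phi_t$ is minimized exactly at $\Delta=0$, and that this minimizer is unique.

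\textbf{Step 2 (constrained minimizer as a $G_t$-projection).} The Step 1 identity says that
\begin{equation*}
\phi_t(a)=\tfrac12\lVert a-a_t^{\rm nat}\rVert_{G_t}^2+\text{const},
\qquad \lVert v\rVert_{G_t}^2=v^\top G_t v .
\end{equation*}
Minimizing $\phi_t$ over $\mathcal F_t^{\rm quad}$ is therefore the same as finding the point of $\mathcal F_t^{\rm quad}$ nearest to $a_t^{\rm nat}$ in the inner product $\langle u,v\rangle_{G_t}=u^\top G_t v$. That nearest point is $\Pi^{G_t}_{\mathcal F_t^{\rm quad}}(a_t^{\rm nat})$ by definition.

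Existence and uniqueness come from the Hilbert projection theorem in $(\R^n,\langle\cdot,\cdot\rangle_{G_t})$, which needs a nonempty closed convex set. The statement grants these properties for $\mathcal F_t^{\rm quad}$. They also hold directly for the two pieces: $\mathcal B_t^{\rm quad}$ is a sublevel set of a convex quadratic, so it is closed and convex and contains $0$. Nonemptiness of the intersection is part of the hypothesis.

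\textbf{Step 3 (characterisation).} For completeness I would record the variational inequality
\begin{equation*}
(a_t^{\rm quad}-a_t^{\rm nat})^\top G_t\,(a-a_t^{\rm quad})\ge0\quad\text{for all }a\in\mathcal F_t^{\rm quad}.
\end{equation*}
This inequality is equivalent to the first-order optimality condition $\nabla\phi_t(a_t^{\rm quad})^\top(a-a_t^{\rm quad})\ge0$, since $\nabla\phi_t(a)=G_t(a-a_t^{\rm nat})$.

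The only delicate point, and the expected obstacle, is conceptual rather than computational. The result concerns the local quadratic model only. The backtracked executed action and the exact nonlinear KL ball $\mathcal B_{\rm KL}$ need not be convex, so the proof should state explicitly that the identity holds for $\mathcal F_t^{\rm quad}$ under the stated hypotheses and not for $\mathcal F_t$.
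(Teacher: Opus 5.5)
Your proof is correct and takes essentially the paper's route: you complete the square to write the objective as $\tfrac12\lVert a-a_t^{\rm nat}\rVert_{G_t}^2$ plus a constant, so the constrained problem is exactly the $G_t$-metric projection, with existence and uniqueness from closed convexity (you cite the Hilbert projection theorem where the paper cites coercivity). The only differences are minor: the paper obtains the unconstrained minimizer from the stationarity condition $q_t+G_ta=0$ rather than from the completed square, and it ends with the same caveat you give about the nonlinear executed action.
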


\begin{proof}
The gradient of Equation~\eqref{eq:supp-local-quadratic} is $q_t+G_ta$;
positive definiteness gives the unique stationary minimizer
$-G_t^{-1}q_t$. Completing the square rewrites the objective as
\begin{equation}
\tfrac12\lVert a-a_t^{\rm nat}\rVert_{G_t}^2
-\tfrac12q_t^\top G_t^{-1}q_t.
\end{equation}
The second term is constant in $a$, so constrained minimization is exactly the
$G_t$-metric projection in Equation~\eqref{eq:supp-local-projection}. Closed
convexity and coercivity give existence and uniqueness. This proposition is a
conditional result for the declared local set; it does not assert convexity or
uniqueness after the nonlinear legal Actuator map.
\end{proof}

The Fisher quadratic is used only to propose a locally feasible action. After
the nonlinear legal Actuator map is applied, DIAL evaluates
Equation~\eqref{eq:supp-exact-step-kl} and backtracks toward $a=0$ until
\begin{equation}
\begin{aligned}
0\leq d_t(a_t^{\rm exec}\mid\mathcal G_t)
&\leq b_t^{\rm step}\leq b_t^{\rm rem},\\
b_{t+1}^{\rm rem}
&=b_t^{\rm rem}-d_t(a_t^{\rm exec}\mid\mathcal G_t).
\end{aligned}
\label{eq:supp-kl-ledger}
\end{equation}
The native action $a=0$ is always legal and has zero exact KL. If
$b_t^{\rm step}=0$ or a finite backtracking search does not find an acceptable
nonzero action, the Controller executes this native fallback.
With $b_0^{\rm rem}=\beta_{\rm path}$, induction gives
$\sum_t d_t(a_t^{\rm exec}\mid\mathcal G_t)\leq\beta_{\rm path}$ along every
executed path, and Proposition~\ref{prop:kl-chain-rule} gives the corresponding
expected normalized Path-KL bound. The reported realized Path-KL is the
evaluation average of that executed-kernel sum, not a pathwise probability
statement. All scores and responses are computed from the same executed
kernel, never from the pre-projection proposal.

For precision, the compact projected-action equation in the main paper is
schematic notation for this complete proposal-and-execution pipeline, not a
literal projection identity for the nonlinear executed action. Formally,
$a_t^{\rm quad}$ is the metric
projection proved in Proposition~\ref{prop:local-step}, whereas
$a_t^{\rm exec}$ is the result after the nonlinear legal map, exact-KL check,
and any backtracking. They coincide only when the exact legal set agrees with
the declared local set and no execution correction is required; no such
identity is used in the ledger proof.

\subsection{One Matched-Basis Actuator Library}

For stochastic source $k$, the general Actuator library may be written
abstractly as
\begin{equation}
Z'_{k,t+1}=\rho_{k,t}Z'_{k,t}
+\sigma_{k,t}B_k(\gamma_{k,t})\varepsilon_{k,t+1}.
\label{eq:supp-actuator-dynamics}
\end{equation}
Innovation scale, persistence, and the optional basis coordinate $\gamma$
describe one library of legal perturbations. They are not separate proposed
methods. The matched-basis principle assigns amplitude coordinates where the
missing degree of freedom is amplitude and configurational coordinates where
the missing degree of freedom is phase, balance, or regime structure. Every
extension is zero-initialized at the native kernel. The operating point and
all numerical audits in Sections~D--F use the main-paper evaluated instance:
$B_k(\gamma_{k,t})=B_k^0$ is fixed, and the available coordinates are the
source-matched innovation scales and persistence terms. Configurational bases
describe the larger method family but are not assigned the reported headline
values. In the source-tied restriction used in Table~\ref{tab:supp-actuator},
the intensity and persistence changes of each native source are coupled and
cannot be selected independently. The full evaluated Actuator retains both
coordinates independently for every source.

Hard feasibility precedes fidelity accounting. The legal projection preserves
the declared persistence range, covariance validity, support, normalization,
and variable balance. Path-KL then measures how far the resulting legal kernel
moves from the native kernel. A weak response from one frozen basis is evidence
about that basis, not proof that a trained matched-basis closure cannot work.

\subsection{When Live Feedback Can Improve on Stale Weather}

Let $\mathcal F_t^{\rm stale}$ be the information generated by pre-forecast
demand, the initial weather token, time, earlier executed actions, and the
budget ledger. Let
$\mathcal F_t^{\rm live}=\sigma(\mathcal F_t^{\rm stale},W_t)$ add the current
Pilot token.

\begin{proposition}[Value of live forecast information]
Let $\Pi_{\rm stale}$ and $\Pi_{\rm live}$ be the admissible policy classes
measurable with respect to $\mathcal F_t^{\rm stale}$ and
$\mathcal F_t^{\rm live}$, with the same actions and constraints. Compare both
classes under a common predecision probability law $\mathbb P_t$, and suppose
$L(\pi)\in L^1(\mathbb P_t)$ for every policy being compared. Then
\begin{equation}
\inf_{\pi\in\Pi_{\rm live}}\E[L(\pi)]
\leq
\inf_{\pi\in\Pi_{\rm stale}}\E[L(\pi)].
\label{eq:supp-feedback-value}
\end{equation}
For strictness, additionally suppose that a feasible selector
$a^*\in\Pi_{\rm live}$ attains the unique conditional minimum of remaining
expected cost almost surely over the same admissible continuation class, a
stale optimum $\pi_{\rm stale}^*\in\Pi_{\rm stale}$ is attained, and
$\mathbb P_t\{\pi_{\rm stale}^*\ne a^*\}>0$. Then the inequality is strict.
\label{prop:feedback-value}
\end{proposition}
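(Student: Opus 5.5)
The weak inequality follows from class inclusion, and strictness follows from a pointwise comparison of conditional remaining costs. First I would show $\Pi_{\rm stale}\subseteq\Pi_{\rm live}$. Since $\mathcal F_t^{\rm live}=\sigma(\mathcal F_t^{\rm stale},W_t)\supseteq\mathcal F_t^{\rm stale}$ at every step, any policy measurable with respect to the stale filtration is also measurable with respect to the live one. Both classes share the same action set, legal set, and KL-ledger constraints, so stale feasibility implies live feasibility. Every compared policy has $L(\pi)\in L^1(\mathbb P_t)$ under the common law $\mathbb P_t$, so $\E[L(\pi)]$ is well defined and the same number in both classes. Taking the infimum over a larger set can only lower it, which gives Equation~\eqref{eq:supp-feedback-value}. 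One point to note here is that the Pilot token $W_t$ is generated by the forecast process under admissible policies and never uses $Y$ (Assumption~\ref{assump:information}). Hence adding it does not change the law of $(H,Y)$ or the admissibility of either class.

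For strictness, let $V(\pi)=\E[L(\pi)\mid\mathcal F_t^{\rm live}]$ denote the conditional remaining expected cost. The hypothesis says $a^*$ attains the unique conditional minimum almost surely over the admissible continuation class. The attained stale optimum $\pi^*_{\rm stale}$ lies in that class by the inclusion above. Therefore $V(a^*)\le V(\pi^*_{\rm stale})$ almost surely, with equality only on $\{\pi^*_{\rm stale}=a^*\}$ by uniqueness. Hence $V(\pi^*_{\rm stale})-V(a^*)$ is nonnegative and strictly positive on $\{\pi^*_{\rm stale}\ne a^*\}$, which has positive $\mathbb P_t$-probability. A nonnegative integrable random variable that is positive on a set of positive measure has positive expectation. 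The tower property then gives $\E[L(a^*)]<\E[L(\pi^*_{\rm stale})]=\inf_{\Pi_{\rm stale}}\E[L]$. Combined with $\inf_{\Pi_{\rm live}}\E[L]\le\E[L(a^*)]$, this yields the strict inequality.

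The main obstacle is making ``unique conditional minimum'' rigorous. The argument needs the statement that $V(a^*)<V(\pi)$ almost surely on $\{\pi\ne a^*\}$ for every competitor $\pi$ in the continuation class, not only that $a^*$ is a minimizer. I would read the hypothesis in exactly this sense and state that reading explicitly. I would also make sure $\pi^*_{\rm stale}$ is compared as a full continuation policy (current action plus later steps) within the same class, so that the conditional comparison is legitimate. Measurability of the set $\{\pi^*_{\rm stale}\ne a^*\}$ follows because both policies are $\mathcal F_t^{\rm live}$-measurable, and integrability of the difference $V(\pi^*_{\rm stale})-V(a^*)$ follows from the assumed $L^1$ condition.
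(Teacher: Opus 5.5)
Your proposal is correct and follows the paper's proof essentially step for step. The weak inequality comes from $\Pi_{\rm stale}\subseteq\Pi_{\rm live}$, via $\mathcal F_t^{\rm stale}\subseteq\mathcal F_t^{\rm live}$; strictness comes from a conditional regret of $\pi^*_{\rm stale}$ that is nonnegative, integrable, and strictly positive on a set of positive $\mathbb P_t$-probability, hence of positive expectation. Your explicit reading of ``unique conditional minimum'' as strict conditional inequality on $\{\pi\ne a^*\}$ is the sense in which the paper's proof uses uniqueness, so stating it clarifies the argument rather than departing from it.
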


\begin{proof}
Every stale-information policy is also live-information measurable because
$\mathcal F_t^{\rm stale}\subseteq\mathcal F_t^{\rm live}$. Hence
$\Pi_{\rm stale}\subseteq\Pi_{\rm live}$, which proves the weak inequality.
Under the additional hypotheses, uniqueness makes the conditional regret of
$\pi_{\rm stale}^*$ nonnegative everywhere and strictly positive on the
positive-probability set where it differs from the feasible live selector
$a^*$. The common predecision law makes this a comparison on one probability
space. Integrability makes the regret expectation finite, and a nonnegative
random variable that is positive with positive probability has strictly
positive expectation. The attained stale value is therefore strictly larger
than the live value.
\end{proof}

The proposition is conditional, not an unconditional superiority theorem for
trained networks. It motivates two empirical checks: the preferred action must
change with live forecast state, and the Pilot token must predict that change
beyond stale information.

\section{Experimental Protocol and Comparability Rules}
\label{sec:supp-protocol}

\subsection{Forecast Scope and Frozen Evaluation Grid}

All comparisons use the same frozen FCN3 forecast task and ERA5 verification
operator as the main paper. The reported fields are geopotential, temperature,
zonal wind, and meridional wind at 500 and 850 hPa: Z500, Z850, T500, T850,
U500, U850, V500, and V850. Each field is evaluated in eight training-frozen
spectral bands and at 28 six-hour leads through 168 h. The three lead windows
are 6--48 h, 54--96 h, and 102--168 h. Two validation-frozen strata of a
forecast-visible large-scale flow index support the state-conditional audit.
They are low/high index strata, not a discovered synoptic taxonomy.

All learned components, normalizers, spectral projections, metric weights,
action constraints, and comparison rules are frozen before the reported
evaluation. Scored members have equal weight. The Pilot uses an independent
random stream and is excluded from every ensemble score.

\subsection{Method Boundaries}

The comparison roster separates methods by where they intervene.

\paragraph{Static output reference.}
Static-SP applies one training-frozen, mean-preserving spread action for each
variable--band pair. The action is shared across all leads and weather states.

\paragraph{Adaptive output-only methods.}
Cond-SP performs weather-conditioned scale-resolved output calibration. PoET
processes the completed member set, while generative multivariate
postprocessing samples a completed output law
\cite{benbouallegue2024poet,chen2024generativepostprocessing}. These methods
can modify final members but do not execute native-noise actions during the
forecast.

\paragraph{Generation-time controllers.}
DRaFT-K-style, KL-PG, and DIAL act in the same native-noise coordinates and use
the same legal action boundary and ex-ante Path-KL budget. DRaFT-K-style uses
truncated reward backpropagation, KL-PG uses a KL-regularized policy-gradient
controller, and DIAL uses the action-conditioned critic and projected
KL-natural update \cite{clark2024draft,black2024ddpo,fan2023dpok}.

Path-KL is defined only for generation-time controllers. Output distortion is
defined only for output interventions. Aggregate signed action alignment is
reported only for controllers because an output remapping does not expose a
like-for-like sequence of native-kernel actions. When an output method appears
in a signed-response table, its separately named spread-response direction
fraction diagnoses the final-member change and is not treated as controller
action alignment.

\subsection{Metrics and Aggregation Orders}

The primary metric is Native-normalized MS-afCRPS:
\begin{equation}
\mathrm{nMS\text{-}afCRPS}(m)
=\frac{\mathcal S_{\rm MS}(m)}{\mathcal S_{\rm MS}(\mathrm{Native})}.
\label{eq:supp-nms}
\end{equation}
Full-field fCRPSS is computed separately from full-field fair CRPS. Ensemble
mean RMSE skill, RMS($C$), $\jsecond/\epsilon_{\rm rel}$, a fixed-pair
variogram skill, and realized Path-KL are complementary audits. Lower values
are better for nMS-afCRPS, RMS($C$), and $\jsecond/\epsilon_{\rm rel}$; higher
values are better for the three skill scores.

Two noncommuting aggregation orders are used deliberately. A field--lead curve
first averages normalized scores over bands and state strata, then converts
the mean to Native-relative skill. A scorecard cell first computes the
cellwise DIAL-to-KL-PG ratio, then averages relative reduction inside one
field--band--lead-window block. The two views therefore answer different
questions and need not be numerically identical after marginalization.
The mechanism tables retain a third registered distinction. Their
primary-weighted long-window skill uses the full-field/band weights of
Equation~\eqref{eq:supp-nms}, whereas the Actuator scale and lead diagnostics
average normalized projected-cell scores with equal cell weight. The latter
isolates where a coordinate restriction acts; it is not a second estimate of
the primary long-window endpoint.

For signed response, a cell is materially underdispersed when
$C_{\rm native}\geq0.15$ and materially overdispersed when
$C_{\rm native}\leq-0.15$. For a controller, action alignment is the fraction
for which $C_{\rm native}\,\delta_s>0$. For an output method, the same sign
calculation is called the spread-response direction fraction. State and lead
sign-change subsets are defined before method comparison. The document reports
point comparisons only; an inferential claim would require paired
forecast-date resampling and a pre-specified multiplicity procedure.

\subsection{Joint Consistency Rules}

Every cell-derived aggregate, lead curve, signed subset, and equal-cell
Actuator diagnostic is computed from one field--band--lead--state tensor.
Registered main-paper and ablation anchors retain their frozen primary
aggregation rather than being relabeled as equal-cell diagnostics. Within each
method and aggregation view, the following identities are enforced jointly:
\begin{equation}
\begin{aligned}
\afcrps_{0.95}&=0.95\fcrps+0.05\crps,\\
C_{\rm after}&=C_{\rm native}+2\delta_e-2\delta_s.
\end{aligned}
\label{eq:supp-joint-identities}
\end{equation}
RMSE is recomputed from the same error amplitudes, and
$\jsecond/\epsilon_{\rm rel}$ is recomputed from the same residual field and
its registered auxiliary moment contribution. Existing main-paper result and
ablation values are inherited without alteration. No selected local case is
used to estimate an aggregate response frequency.

\raggedbottom
\section{Extended Results Across Variables, Scales, and Leads}
\label{sec:supp-extended-results}

\subsection{Aggregate Point Comparison with Static-SP}

Table~\ref{tab:supp-aggregate} separates reliability, mean skill, and
native-law fidelity. DIAL has the lowest primary score, RMS($C$), and pooled
reliability ratio while using less realized Path-KL than the two
generation-time comparators at the registered budget. Generative
postprocessing retains the largest ensemble-mean RMSE gain. It also retains
the strongest fixed-pair variogram skill (2.681\%, compared with 0.728\% for
DIAL), showing that the primary-score and fixed-pair fidelity rankings capture
complementary behavior. Static-SP improves the Native score but remains above
the reliability threshold, consistent with the value of lead- and
state-dependent actions.

\begin{table}[H]
\centering
\footnotesize
\setlength{\tabcolsep}{1.8pt}
\renewcommand{\arraystretch}{1.05}
\textit{(a) Primary score and forecast skill}\\[2pt]
\begin{tabular}{@{}lrrr@{}}
\toprule
Method & nMS-afCRPS $\downarrow$ & fCRPSS (\%) $\uparrow$
& RMSE gain (\%) $\uparrow$ \\
\midrule
Native FCN3 & 1.0000 & 0.000 & 0.000 \\
Static-SP & 0.9827 & 1.431 & 0.000 \\
Cond-SP & 0.9681 & 2.737 & 0.018 \\
PoET & 0.9453 & 4.824 & 1.274 \\
Gen.\ postproc. & 0.9370 & 5.913 & 2.413 \\
DRaFT-K-style & 0.9345 & 5.663 & 1.537 \\
KL-PG & 0.9320 & 6.050 & 1.780 \\
DIAL & 0.9173 & 7.543 & 2.117 \\
\bottomrule
\end{tabular}
\par\vspace{5pt}
\textit{(b) Reliability and native-law fidelity}\\[2pt]
\begin{tabular}{@{}lrrr@{}}
\toprule
Method & RMS($C$) $\downarrow$ & $\jsecond/\epsilon_{\rm rel}$ $\downarrow$
& Path-KL \\
\midrule
Native FCN3 & 0.4353 & 1.480 & 0.0000 \\
Static-SP & 0.3751 & 1.236 & -- \\
Cond-SP & 0.3207 & 1.104 & -- \\
PoET & 0.2491 & 0.928 & -- \\
Gen.\ postproc. & 0.2278 & 0.817 & -- \\
DRaFT-K-style & 0.2436 & 0.892 & 0.0266 \\
KL-PG & 0.2310 & 0.808 & 0.0273 \\
DIAL & 0.1794 & 0.591 & 0.0259 \\
\bottomrule
\end{tabular}
\caption{Extended aggregate comparison. DIAL has the lowest
nMS-afCRPS, RMS($C$), and $\jsecond/\epsilon_{\rm rel}$, and uses less Path-KL
than both generation-time comparators; generative postprocessing retains the
largest RMSE gain. Static-SP is a mean-preserving training-frozen correction
reported as a supplement-only control. For the seven methods shared with the
main paper, nMS-afCRPS, fCRPSS, RMSE gain, and RMS($C$) reproduce the same
validation-frozen operating points; panel (b) adds the corresponding pooled-
reliability and Path-KL audits. Path-KL is reported for Native and
generation-time controllers; em dashes identify output interventions.}
\label{tab:supp-aggregate}
\end{table}

\subsection{Complete Eight-Field Lead Curves}

Figure~\ref{fig:supp-full-lead-curves} extends the representative main-paper
curves to all eight pressure-level fields. DIAL has the largest average skill
in every field, but its margin varies with lead. Z500 dips at short range before
widening at medium and long leads; U850 and V850 show additional lead structure,
whereas Z850, T500, U500, and V500 vary more smoothly. These field-level averages
are complemented by signed audits that retain local DIAL losses.

\begin{figure*}[!t]
\centering
\includegraphics[width=0.90\textwidth]{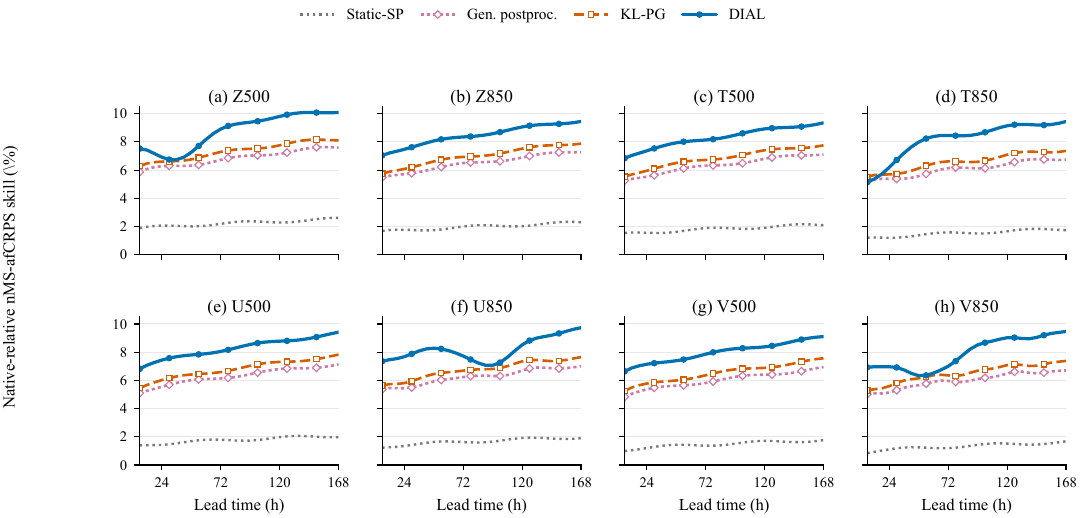}
\caption{Native-relative nMS-afCRPS skill across all eight fields and 28
six-hour leads. DIAL has the highest lead-averaged skill in every field, while
its margin varies with lead. Static-SP is a training-frozen output correction;
generative postprocessing has the lowest aggregate nMS-afCRPS among output-side
comparators, and KL-PG has the lowest aggregate nMS-afCRPS among generation-time
baselines.
Positive values favor the displayed method over Native FCN3.}
\label{fig:supp-full-lead-curves}
\end{figure*}

\section{Signed Reliability Response Across State and Lead}
\label{sec:supp-signed-response}

\subsection{Bidirectional Response on Material Gaps}

\begin{table}[H]
\centering
\footnotesize
\setlength{\tabcolsep}{5.0pt}
\renewcommand{\arraystretch}{1.05}
\textit{(a) Underdispersed: $C_{\rm native}\geq0.15$}\\[2pt]
\begin{tabular}{@{}lrrr@{}}
\toprule
Method & nMS-afCRPS & $\Delta\log$ spread & RMS($C$) \\
\midrule
Static-SP & 0.9692 & +0.0315 & 0.4150 \\
Gen.\ postproc. & 0.9256 & +0.0858 & 0.2437 \\
KL-PG & 0.9329 & +0.0769 & 0.2794 \\
DIAL & 0.9199 & +0.1059 & 0.2268 \\
\bottomrule
\end{tabular}
\par\vspace{5pt}
\textit{(b) Overdispersed: $C_{\rm native}\leq-0.15$}\\[2pt]
\begin{tabular}{@{}lrrr@{}}
\toprule
Method & nMS-afCRPS & $\Delta\log$ spread & RMS($C$) \\
\midrule
Static-SP & 0.9957 & -0.0354 & 0.4413 \\
Gen.\ postproc. & 0.9492 & -0.1331 & 0.2782 \\
KL-PG & 0.9316 & -0.1444 & 0.2449 \\
DIAL & 0.9157 & -0.1809 & 0.1767 \\
\bottomrule
\end{tabular}
\caption{Bidirectional response on material Native reliability gaps.
DIAL adds effective spread on underdispersed cells ($+0.1059$) and suppresses
it on overdispersed cells ($-0.1809$), attaining the lowest nMS-afCRPS and
RMS($C$) in both groups. The groups contain 1464 and 1267
field--band--lead--state cells, respectively.}
\label{tab:supp-bidirectional}
\end{table}

The direction of DIAL's mean response reverses with the Native gap:
$+0.1059$ on underdispersed cells and $-0.1809$ on overdispersed cells.
Equation~\eqref{eq:supp-c-response-identity} then gives lower residual
RMS($C$) than KL-PG and the output-side comparators in both material groups.
Each group is scored on its own. Improvement therefore cannot arise by
cancelling opposing signed residuals.

\subsection{Weather-State and Lead Sign Changes}

\begin{table}[H]
\centering
\scriptsize
\setlength{\tabcolsep}{2.6pt}
\renewcommand{\arraystretch}{1.05}
\textit{(a) Primary score by sign-change subset}\\[2pt]
\begin{tabular}{@{}lrrrrr@{}}
\toprule
Subset & Cells & Static-SP & Gen.\ postproc. & KL-PG & DIAL \\
\midrule
Stable & 2218 & 0.9732 & 0.9304 & 0.9320 & 0.9176 \\
State flip & 470 & 0.9976 & 0.9471 & 0.9323 & 0.9167 \\
Lead flip & 488 & 0.9978 & 0.9464 & 0.9289 & 0.9128 \\
Both flips & 408 & 0.9989 & 0.9498 & 0.9353 & 0.9218 \\
\bottomrule
\end{tabular}
\par\vspace{5pt}
\textit{(b) Signed-response diagnostics}\\[2pt]
\begin{tabular}{@{}lrrr@{}}
\toprule
Subset & \shortstack{DIAL margin\\vs. best static/output (\%)}
& \shortstack{Gen. spread\\response fraction}
& \shortstack{DIAL action\\alignment} \\
\midrule
Stable & 1.377 & 0.9558 & 0.8454 \\
State flip & 3.214 & 0.4936 & 0.9340 \\
Lead flip & 3.555 & 0.6475 & 0.9344 \\
Both flips & 2.945 & 0.5466 & 0.9412 \\
\bottomrule
\end{tabular}
\caption{Primary-score and signed-response behavior stratified by
whether the Native reliability-gap sign changes with weather state, lead, or
both. DIAL's margin over the best static or output-only comparator rises from
1.377\% on stable-sign cells to 2.945--3.555\% on sign-changing subsets, where
its action alignment remains above 0.93. Method columns report nMS-afCRPS; the
generative response fraction records whether the final-member spread change
has the required sign, and DIAL alignment records whether the Controller action
has that sign. nMS-afCRPS supplies the proper-score comparison, while the last
two columns diagnose response direction.}
\label{tab:supp-sign-change}
\end{table}

DIAL's primary-score margin over the best static or output-only method is
1.377\% on stable-sign cells; count-weighted pooling gives 3.256\% across the
three sign-changing groups, 1.879 percentage points larger. Generative
postprocessing's spread-response fraction exceeds DIAL's action alignment on
stable cells but falls under sign changes, whereas DIAL's alignment remains
above 0.93 in all three groups. This isolates live response to a changing
correction direction, while the stable-cell comparison remains method- and
diagnostic-specific.

\subsection{Selected Signed Challenge Cells}

The U500 pair makes the lead-dependent reversal explicit: the same variable
and band require increased spread at 48 h and suppressed spread at 132 h.
Table~\ref{tab:supp-signed-cases} also keeps two counterexamples where DIAL
moves in the wrong direction. Together with the group averages, they document
both typical correction and retained local failures.

\begin{table}[H]
\centering
\scriptsize
\setlength{\tabcolsep}{2.2pt}
\renewcommand{\arraystretch}{1.02}
\textit{(a) Intended-direction responses}\\[2pt]
\begin{minipage}[t]{\columnwidth}
\centering
\begin{tabular}{@{}llrrrrr@{}}
\toprule
Field & Band, lead & $C_{\rm N}$ & $\Delta s_{\rm D}$
& $\Delta e_{\rm D}$ & $C_{\rm D}$ & $C_{\rm G}$ \\
\midrule
Z500 & B2, 72 h & +0.347 & +0.126 & -0.006 & +0.083 & +0.118 \\
T850 & B5, 120 h & +0.281 & +0.091 & -0.004 & +0.091 & +0.076 \\
U500 & B3, 48 h & +0.224 & +0.071 & +0.003 & +0.088 & +0.106 \\
U500 & B3, 132 h & -0.196 & -0.060 & -0.002 & -0.080 & -0.112 \\
V850 & B7, 96 h & -0.318 & -0.101 & +0.004 & -0.108 & -0.132 \\
T500 & B8, 168 h & -0.247 & -0.069 & +0.005 & -0.099 & -0.081 \\
Z850 & B1, 24 h & +0.183 & +0.046 & +0.001 & +0.093 & +0.069 \\
V500 & B4, 144 h & +0.296 & +0.086 & -0.005 & +0.114 & +0.146 \\
\bottomrule
\end{tabular}
\end{minipage}
\par\vspace{5pt}
\textit{(b) Retained wrong-direction responses}\\[2pt]
\begin{minipage}[t]{\columnwidth}
\centering
\begin{tabular}{@{}llrrrrr@{}}
\toprule
Field & Band, lead & $C_{\rm N}$ & $\Delta s_{\rm D}$
& $\Delta e_{\rm D}$ & $C_{\rm D}$ & $C_{\rm G}$ \\
\midrule
T850 & B2, 54 h & -0.172 & +0.017 & -0.003 & -0.212 & -0.074 \\
Z500 & B8, 30 h & +0.164 & -0.011 & +0.002 & +0.190 & +0.092 \\
\bottomrule
\end{tabular}
\end{minipage}
\captionsetup{font=footnotesize,skip=1pt}
\caption{Selected signed challenge cells satisfying
Equation~\eqref{eq:supp-c-response-identity}. DIAL moves in the intended
direction in the eight panel-(a) examples; panel (b) retains two local
wrong-direction responses. Generative postprocessing has lower $|C|$ in five
of ten rows. $C_{\rm N}$, $C_{\rm D}$, and $C_{\rm G}$ are Native, DIAL, and
generative-postprocessing reliability gaps; $\Delta s_{\rm D}$ and
$\Delta e_{\rm D}$ are DIAL changes in log spread and log error. These examples
illustrate local response mechanisms and retained failures; the exhaustive
partition in Table~\ref{tab:supp-sign-change} supplies aggregate alignment.}
\label{tab:supp-signed-cases}
\end{table}

\section{Budget Robustness and Mechanism Audits}
\label{sec:supp-mechanism}

\subsection{Registered Budget and Sensitivity Grid}

Table~\ref{tab:supp-budget} shows a constrained optimum at the registered
operating point. Increasing the budget to the pre-specified value
$\beta_{\rm path}=0.030$ improves the primary score, pooled reliability, and
variogram audit together. Raising it further to $0.045$ only reduces
$\jsecond/\epsilon_{\rm rel}$ while worsening both nMS-afCRPS and variogram
skill. We therefore retain $0.030$ as the validation-frozen operating point.

\begin{table}[H]
\centering
\footnotesize
\setlength{\tabcolsep}{2.8pt}
\renewcommand{\arraystretch}{1.03}
\begin{tabular}{@{}rrrrr@{}}
\toprule
$\beta_{\rm path}$ & \shortstack{Realized\\KL} & nMS-afCRPS
& $\jsecond/\epsilon_{\rm rel}$ & \shortstack{Variogram\\skill (\%)} \\
\midrule
0.000 & 0.0000 & 1.0000 & 1.480 & 0.000 \\
0.006 & 0.0048 & 0.9603 & 1.193 & 0.158 \\
0.012 & 0.0098 & 0.9415 & 0.951 & 0.361 \\
0.020 & 0.0165 & 0.9267 & 0.728 & 0.579 \\
0.030 & 0.0259 & 0.9173 & 0.591 & 0.728 \\
0.045 & 0.0387 & 0.9201 & 0.566 & 0.214 \\
\bottomrule
\end{tabular}
\caption{DIAL sensitivity across the validation-frozen ex-ante Path-KL grid.
The pre-specified $\beta_{\rm path}=0.030$ gives the lowest nMS-afCRPS and the
highest variogram skill; raising the budget to $0.045$ only lowers
$\jsecond/\epsilon_{\rm rel}$ while degrading both metrics. Other rows are
sensitivity points around this registered operating point.}
\label{tab:supp-budget}
\end{table}

\subsection{Restrictions of the Matched-Basis Actuator}

The full evaluated coordinate set gives the best primary score, RMSE gain, and
pooled reliability ratio at comparable realized KL. Amplitude-only control
retains 97.9\% of full DIAL's small-scale skill but 77.5\% of its large-scale
skill, consistent with amplitude repair being more complete at smaller scales.
Persistence-only control retains substantial long-lead skill while giving
lower small-scale and aggregate skill. Coupling intensity and persistence
within each source recovers complementary scale and lead benefits: the
source-tied restriction improves nMS-afCRPS, large-scale skill, and long-lead
skill over either single-coordinate restriction, with the full evaluated
source-matched coordinate set retaining the strongest overall performance.

\subsection{Forecast-Visible Information and Memory}

Each information restriction degrades the primary score and long-lead skill.
Removing recurrent memory produces the smallest loss, while removing the Pilot
or both information sources produces substantially larger losses. The
main-paper matched stale-weather ablation retains case-specific demand and
repeats the initial weather token. Here, No Pilot retains that demand but
replaces every weather token with its training mean, while the first row also
replaces demand with its training climatology. The four rows establish
directional component audits; estimating an Estimator-by-feedback interaction
requires paired uncertainty for the complete factorial contrast.

\begin{table*}[!t]
\begin{minipage}[t]{0.485\textwidth}
\centering
\scriptsize
\setlength{\tabcolsep}{3.0pt}
\renewcommand{\arraystretch}{1.04}
\textit{(a) Aggregate score, mean skill, reliability, and fidelity}\\[2pt]
\begin{tabular}{@{}lrrrr@{}}
\toprule
Actuator restriction & nMS-afCRPS & RMSE gain (\%)
& $\jsecond/\epsilon_{\rm rel}$ & Path-KL \\
\midrule
Amplitude only & 0.9258 & 1.905 & 0.704 & 0.0243 \\
Persistence only & 0.9321 & 1.442 & 0.803 & 0.0228 \\
Source-tied & 0.9234 & 2.004 & 0.654 & 0.0261 \\
Full evaluated & 0.9173 & 2.117 & 0.591 & 0.0259 \\
\bottomrule
\end{tabular}
\par\vspace{5pt}
\textit{(b) Native-relative equal-cell scale and lead skill}\\[2pt]
\begin{tabular}{@{}lrrrr@{}}
\toprule
Restriction & \shortstack{Large-scale\\skill (\%)}
& \shortstack{Mid-scale\\skill (\%)}
& \shortstack{Small-scale\\skill (\%)}
& \shortstack{Long-lead\\skill (\%)} \\
\midrule
Amplitude only & 6.632 & 7.601 & 7.764 & 8.236 \\
Persistence only & 7.509 & 7.061 & 6.039 & 7.873 \\
Source-tied & 7.949 & 7.790 & 7.338 & 8.476 \\
Full evaluated & 8.556 & 8.413 & 7.929 & 9.084 \\
\bottomrule
\end{tabular}
\captionof{table}{Restrictions of the evaluated source-matched Actuator. At
comparable Path-KL, the full evaluated Actuator has the lowest nMS-afCRPS and
$\jsecond/\epsilon_{\rm rel}$ and the highest RMSE gain and every displayed
scale and lead skill. Scale columns pool B1--B2, B3--B5, and B6--B8; long lead
covers 102--168 h. Each row evaluates the DIAL Estimator--Controller framework
under a different admissible subset of Actuator coordinates.}
\label{tab:supp-actuator}
\end{minipage}\hfill
\begin{minipage}[t]{0.485\textwidth}
\centering
\scriptsize
\setlength{\tabcolsep}{2.0pt}
\renewcommand{\arraystretch}{1.04}
\textit{(a) Aggregate performance}\\[2pt]
\begin{tabular}{@{}lrrr@{}}
\toprule
Information variant & nMS-afCRPS & RMSE gain (\%)
& $\jsecond/\epsilon_{\rm rel}$ \\
\midrule
Climatology + stale wx. & 0.9595 & 0.628 & 0.946 \\
No Pilot & 0.9509 & 0.923 & 0.852 \\
No recurrent memory & 0.9224 & 1.904 & 0.628 \\
Full DIAL & 0.9173 & 2.117 & 0.591 \\
\bottomrule
\end{tabular}
\par\vspace{5pt}
\textit{(b) Native-law fidelity and long-lead skill}\\[2pt]
\begin{tabular}{@{}lrr@{}}
\toprule
Information variant & Realized Path-KL
& \shortstack{Primary-weighted\\102--168 h skill (\%)} \\
\midrule
Climatology + stale wx. & 0.0229 & 3.814 \\
No Pilot & 0.0246 & 4.672 \\
No recurrent memory & 0.0262 & 7.964 \\
Full DIAL & 0.0259 & 8.617 \\
\bottomrule
\end{tabular}
\captionof{table}{Supplementary information-source audit. Full DIAL has the
lowest nMS-afCRPS and $\jsecond/\epsilon_{\rm rel}$ and the highest RMSE gain
and primary-weighted long-lead skill. Removing recurrent memory causes the
smallest degradation; removing live forecast information causes larger
losses. The first row combines training-climatological demand with the initial
weather token; No Pilot uses the training-mean weather token; and No recurrent
memory resets Q-Critic memory at each step.}
\label{tab:supp-information}
\end{minipage}
\end{table*}

\FloatBarrier

\bibliography{references}

\end{document}